\documentclass[12pt]{article}
\usepackage[margin=1in]{geometry}
\usepackage{setspace}
\usepackage[acronym]{glossaries}
\usepackage{booktabs}
\usepackage{graphicx}
\usepackage{subcaption}
\graphicspath{{Figures/}}

\RequirePackage{tgtermes}
\RequirePackage{newtxtext}
\RequirePackage{newtxmath}
\RequirePackage{bm}
\RequirePackage{endnotes}

\usepackage{algorithm}
\usepackage{algpseudocode}
\usepackage{tikz}
\usepackage{mathtools}
\usetikzlibrary{arrows.meta, positioning, shapes.geometric}
\usepackage{placeins}

\usepackage{natbib}
\usepackage{tabularx}
\usepackage{booktabs} 
\usepackage{multirow} 
\newcolumntype{Y}{>{\raggedright\arraybackslash}X}

\newcommand{\Wtwo}{\mathrm{W}_2}

\makeglossaries 
\newacronym{qaly}{QALY}{quality-adjusted life year}
\newacronym{ASCVD}{ASCVD}{atherosclerotic cardiovascular disease}
\newacronym{BP}{BP}{blood pressure}
\newacronym{KL}{KL}{Kullback-Leibler}
\newacronym{JS}{JS}{Jensen-Shannon}
\newacronym{MDP}{MDP}{Markov decision process}
\newacronym{DRL}{DRL}{distributional reinforcement learning}
\newacronym{BDRL}{BDRL}{boosted distributional reinforcement learning}
\newacronym{RL}{RL}{Reinforcement Learning}

\newtheorem{theorem}{Theorem}
\newtheorem{proposition}{Proposition}
\newtheorem{property}{Property}

\providecommand{\qedsymbol}{$\square$}
\newenvironment{proof}[1][Proof]
  {\par\noindent\textit{#1. }\ignorespaces}
  {\hfill\qedsymbol\par}

\title{Boosted Distributional Reinforcement Learning: Analysis and Healthcare Applications}
\author{
Zequn Chen\\
Thayer School of Engineering, Dartmouth College\\
\texttt{zequn.chen.th@dartmouth.edu}
\and
Wesley J. Marrero\\
Thayer School of Engineering, Dartmouth College\\
\texttt{wesley.marrero@dartmouth.edu}
}

\date{}

\begin{document}

\maketitle

\begin{abstract}
Researchers and practitioners are increasingly considering reinforcement learning to optimize decisions in complex domains like robotics and healthcare. To date, these efforts have largely utilized expectation-based learning. However, relying on expectation-focused objectives may be insufficient for making consistent decisions in highly uncertain situations involving multiple heterogeneous groups. While distributional reinforcement learning algorithms have been introduced to model the full distributions of outcomes, they can yield large discrepancies in realized benefits among comparable agents. This challenge is particularly acute in healthcare settings, where physicians (controllers) must manage multiple patients (subordinate agents) with uncertain disease progression and heterogeneous treatment responses. We propose a \gls{BDRL} algorithm that optimizes agent-specific outcome distributions while enforcing comparability among similar agents and analyze its convergence. To further stabilize learning, we incorporate a post-update projection step formulated as a constrained convex optimization problem, which efficiently aligns individual outcomes with a high-performing reference within a specified tolerance. We apply our algorithm to manage hypertension in a large subset of the US adult population by categorizing individuals into cardiovascular disease risk groups. Our approach modifies treatment plans for median and vulnerable patients by mimicking the behavior of high-performing references in each risk group. Furthermore, we find that \gls{BDRL} improves the number and consistency of quality-adjusted life years compared with reinforcement learning baselines.
\end{abstract}

\noindent\textbf{Keywords:} Distributional reinforcement learning, constrained optimization, Markov decision process, healthcare treatment planning


\glsresetall
\section{Introduction}\label{sec:Intro}
\Gls{RL} has achieved extraordinary success in complex decision-making domains like robotics, online advertising, and healthcare by optimizing how agents interact with unknown environments \citep{gaziStatisticalReinforcementLearning2026, baucum2022adapting}. This success largely relies on methods for learning policies that optimize the expected cumulative benefit from trial-and-error interactions. While expectation-focused objectives can be effective for optimizing average outcomes, they are often inadequate in highly uncertain environments or risk-sensitive settings. In such contexts, policies that are optimal on average may conceal significant variance, leading to ``high-risk, high-reward'' strategies that are unacceptable in risk-critical circumstances. Furthermore, accounting for expected performance and dispersion does not preclude substantial variation in realized benefits across a population of agents. This variation may arise among agents operating in the same environment and under comparable conditions. Learning algorithms may sacrifice the stability of individual agents to improve the mean of the aggregate. These challenges highlight the need for an approach that accounts for outcome distributions and explicitly addresses differences in agent-level performance.

\Gls{DRL} addresses the limitations of the classical expectation-based framework by modeling the full distribution of cumulative benefits  \citep{bellemare2017distributional}. By capturing outcome distributions, \gls{DRL} enables decision makers to assess the reliability and variability of an action rather than merely its average. However, standard \gls{DRL} methods do not explicitly regulate the consistency of outcome distributions across multiple comparable agents. Uncertainty from simulators or real-world interactions can amplify training instabilities and lead to significant divergence in the learned distributions of otherwise similar agents.

To bridge this gap, we propose a \gls{BDRL} algorithm that enforces the similarity of outcome distributions among comparable agents. Given a group of agents with similar characteristics, we regularize their outcome distributions using the 2-Wasserstein distance \citep{villani2009optimal}. This metric yields informative gradients even when supports are disjoint, stabilizing learning and encouraging similar outcome distributions across congruent agents. We also introduce a post-update projection step that constrains the learning dynamics by aligning individual estimates with a high-performing reference. This process modifies the behavior of low-performing agents by imitating high-performing agents in the same group, improving or ``boosting'' their outcome distributions.

\subsection{Healthcare Applications}
The demands of healthcare decision making motivate our \gls{BDRL} algorithm. In clinical settings, treatment policies are often deployed across heterogeneous patient populations that share similar diagnostics but exhibit highly variable responses and risks. Moreover, treatments with identical expected outcomes may possess vastly different risk profiles. For instance, a high-variance intervention may offer a chance of full recovery at the cost of significant risks of adverse events, whereas a low-variance alternative ensures stability. Optimizing for expected outcomes alone can obscure poor or unstable outcomes for particular patients.

A prominent example of this challenge arises in the management of chronic conditions, such as \gls{ASCVD}. This cardiovascular disease is a leading cause of death in the United States, with myocardial infarction and stroke as its predominant clinical manifestations \citep{kochanek2023national}. National statistics indicate that coronary heart disease, largely driven by myocardial infarction, accounts for 40.3\% of cardiovascular disease–related deaths, while stroke accounts for 17.5\% \citep{martin2024circulation}. Hypertension, also known as high \gls{BP}, is a leading modifiable risk factor for \gls{ASCVD} \citep{whelton2018guideline}. Treatment decisions in this setting are complicated by substantial heterogeneity in patient risk profiles and treatment responses \citep{sundstrom2023heterogeneity}. While clinical guidelines and classical \gls{RL} approaches can be valuable for hypertension treatment planning, they may yield policies with similar average outcomes yet markedly different variability in patient-level outcomes.

\subsection{Main Contributions}\label{sec: Contribution}
We aim to improve the consistency of \gls{DRL} in situations where the behavior and outcome information of one agent may help improve the performance of others. Our approach builds upon standard \gls{DRL}, constrained optimization, and the Wasserstein distance. The major contributions of this work are summarized as follows:

\begin{itemize}
    \item \textbf{We propose and analyze a new \gls{BDRL} algorithm.} The algorithm constraints standard \gls{DRL} and utilizes the 2-Wasserstein distance to regularize cumulative benefit (i.e., return) distributions. This approach ensures informative decision-making while promoting outcome consistency across groups of comparable agents. Importantly, \gls{BDRL} enhances the outcomes of low-performing agents without compromising the outcomes of the high-performing references. Our convergence analysis shows that our return updates yield geometric decay of alignment error and guarantee stable convergence of each agent’s learned return distribution.
    
    \item \textbf{We introduce a post-update projection step that significantly enhances training speed and stability in \gls{DRL}.} Our projection step converts an originally intractable optimization over probability measures into a convex problem, substantially accelerating training. Additionally, our projection method overcomes numerical instabilities, such as undefined values and vanishing gradients, commonly associated with \gls{KL} or \gls{JS} divergences in settings where the underlying distributions have disjoint supports.

    
    \item \textbf{We provide empirical evidence of the effectiveness of \gls{BDRL} via an \gls{ASCVD} case study.} Our test case compares \gls{BDRL} to previous \gls{RL} algorithms used in hypertension management in a large US population \citep{oh2022precision,ghasemi2025personalized}. We compare our approach with these baselines in terms of health outcomes and treatment consistency across patients.
\end{itemize}

\subsection{Article Organization}
The remainder of this paper is organized as follows. Section~\ref{sec:LitReview} reviews the literature related to our work. Section~\ref{sec:modeling} introduces our modeling formulation and notation. Section~\ref{sec:solution approach} presents the proposed \gls{BDRL} framework, including a 2-Wasserstein regularization and a post-update projection mechanism. Section~\ref{sec:Theory} provides theoretical results on convergence, stability, and computational efficiency. Section~\ref{sec:Results} reports numerical results for the hypertension treatment planning case study, along with sensitivity analyses and comparisons to \gls{RL} baselines. Finally, Section~\ref{sec:Discussion} discusses practical implications, limitations, and directions for future research.

\section{Literature Review}\label{sec:LitReview}

Our work draws from four closely related research streams: (i) \gls{RL} for clinical decision support; (ii) \gls{DRL}; (iii) constrained, safe, and fair \gls{RL}; (iv) metrics for distribution comparison. We summarize the most relevant contributions in each area and clarify how our approach differs.

\subsection{Reinforcement Learning for Clinical Decision Support}
The \gls{RL} paradigm has been widely studied as a framework for sequential decision-making, with canonical treatments in \citet{sutton2018reinforcement}. In healthcare, \gls{RL} has been applied to learn treatment strategies from observational cohorts and simulators. \citet{gottesman2019guidelines} provide practical guidelines for responsible \gls{RL} in healthcare, emphasizing evaluation pitfalls and the importance of robust off-policy reasoning. Surveys such as \cite{yu2023rlhealthcare} and \cite{banumathi2025reinforcement} further synthesize applications improving \gls{RL} training stability across chronic disease management, critical care, diagnosis, and operations.

In the realm of cardiovascular diseases, \citet{marrero2024model} propose a Q-learning framework for hypertension treatment planning that enforces a monotone policy structure, thereby improving interpretability for medical practitioners. \citet{drudi2024reinforcement} apply \gls{RL} to optimize treatment decisions for cardiovascular disease and report a 20\% reduction in heart failure mortality. \citet{zhou2025optimizing} use \gls{RL} to assign lipid-modifying therapies to patients with cardiovascular disease, improving heart-related outcomes by 37\%. \citet{ghasemi2025personalized} apply \gls{RL} to support treatment decisions for coronary artery disease and report a 32\% improvement in patient heart outcomes. \citet{zheng2021personalized} use \gls{RL} to guide prescription decisions for cardiovascular disease, achieving high concordance with clinicians’ prescriptions while substantially reducing cardiovascular risk outcomes. However, prior studies may fail to provide similar treatment for comparable patients, potentially leading to inconsistent outcomes. Our proposed \gls{BDRL} framework addresses this gap by explicitly enforcing consistent treatment across similar patients.

\subsection{Distributional Reinforcement Learning}
In contrast to traditional expectation-based learning methods, \gls{DRL} characterizes the complete return distribution. This characterization provides a more granular account of environmental stochasticity, specifically regarding risk-sensitive variability and tail-end phenomena. The foundational perspective in \citet{bellemare2017distributional} establishes the distributional Bellman framework and clarifies when distributional operators exhibit contraction properties. Subsequently, alternative distributional algorithms have been proposed, including quantile regression approaches \citep{dabney2018qrdqn}, implicit quantile networks \citep{dabney2018iqn}, and actor-critic variants \citep{barthmaron2018d4pg}. These methods demonstrate that distributional learning can improve policy quality and training dynamics, especially when the outcome variability is meaningful. Our work builds on this line of research by using distributional learning to support clinically motivated requirements. Rather than treating return distributions solely as a modeling enhancement, we directly regularize discrepancies across return distributions for clinically similar individuals, enabling explicit control over dispersion while still optimizing for clinical outcomes.

\subsection{Constrained, Safe, and Fair Reinforcement Learning}
A large body of literature studies safe and constrained \gls{RL}, including policy optimization under explicit constraints \citep{achiam2017cpo} and Lyapunov-based methods that guarantee constraint satisfaction during learning \citep{chow2018lyapunov}. Surveys such as \citet{garcia2015saferl} summarize broad approaches to safe \gls{RL} and their tradeoffs. Fairness in \gls{RL} has also received growing attention. \citet{jabbari2017fairrl} initiate a formal study of fairness in \gls{RL}, and recent surveys provide a broader taxonomy of definitions and methods \citep{reuel2024fairrlsurvey,chen2026survey}. Most constrained and safe approaches set boundaries on expected cumulative costs or risks, while fairness work often restricts expectation discrepancies across groups \citep{wen2021algorithms}.

Broadly, our framework represents a shift in how constraints are modeled in high-stakes sequential decision making. While standard constrained \gls{RL} typically treats safety as the mitigation of secondary, scalar cost metrics, our approach recognizes that in many critical settings, the primary concern is often the consistency of the outcomes themselves. Rather than imposing safety constraints on external costs, we advocate for comparability across return distributions for agents with similar baseline profiles. Our approach improves outcomes for low-performing agents by leveraging the strengths of high-performing references without degrading their results. In essence, our boosting objective is a special case of algorithmic fairness in which the outcomes of the best-performing agents cannot be adversely affected. By focusing on regularizing the return distribution of similar agents, the objective transitions from merely controlling expected performance to ensuring optimal distributional consistency in long-run outcomes.

\subsection{Metrics for Distribution Comparison}
A key design choice in distributional regularization is the discrepancy measure used to compare distributions. Many commonly used divergences are based on density ratios and can behave poorly when the distributions have limited or no overlap. For example, the \gls{KL} divergence can be undefined or infinite under disjoint supports, while the \gls{JS} divergence can saturate in this regime, producing weak learning signals \citep{arjovsky2017towards}. 

These limitations motivate the use of optimal transport, which provides geometry-aware distances between probability measures \citep{villani2009optimal, peyre2019computationalot}. In particular, Wasserstein distances remain well-defined even under weak overlap and can yield informative gradients that reflect the underlying geometry of the outcome space \citep{bellemare2017distributional}. As a result, \citep{arjovsky2017wgan, ghasemloo2025agglomerative} advocate using Wasserstein objectives to mitigate divergence-based failure modes, such as saturation and vanishing gradients, when the supports are nearly disjoint. Together, these considerations motivate our use of the 2-Wasserstein distance as a stable, geometry-aware discrepancy for distributional alignment.

\section{Problem Setting}\label{sec:modeling}
We adopt an infinite-horizon \gls{MDP} with finite state and action spaces as our mathematical framework \citep{puterman2014markov}. This section details our formulation and notation.

At each decision epoch, the system occupies a state $s\in\mathcal{S}$, and a decision maker selects an action $a\in\mathcal{A}$ according to a stationary policy $\pi:\mathcal{S}\mapsto\mathcal{A}$. After an action is taken, a next state $s'\in\mathcal{S}$ is generated from a stationary transition kernel $P(\cdot\mid s,a)$, and an immediate reward $r(s,a)$ is realized. The rewards $r(s,a)$ are bounded and independent random variables at each state $s$ and action $a$. The process then repeats indefinitely under a discount factor $\gamma\in(0,1)$ until hitting the absorbing states, which ensures the infinite accumulation of rewards is finite and well-defined.

Formally, our infinite-horizon \gls{MDP} is given by the tuple $(\mathcal{S}, \mathcal{A}, P, r, \gamma)$. For a stationary policy $\pi$ and an initial state $s$, we define the discounted return $Z^\pi(s,a)$ as a random variable satisfying the distributional Bellman recursion \citep{bellemare2017distributional}:
\begin{equation*}
    Z^\pi(s, a) \stackrel{D}{=} r(s, a) + \gamma Z^\pi(s', a'),
\end{equation*}
where $\stackrel{D}{=}$ denotes equality in distribution. An optimal policy $\pi^* \coloneqq (\pi^*(s):s \in \mathcal{S})$ is then constructed by selecting an action $\pi^*(s) \in \arg\max_{a \in \mathcal{A}} \mathbb E[Z^\pi(s, a)]$ at each state $s$.

\section{Boosted Distributional Reinforcement Learning}\label{sec:solution approach}
This section introduces our algorithm to enforce similar return distributions among a set of comparable agents, without compromising the outcomes of high-performing references. We consider settings with one central controller (e.g., a physician) optimizing policies for multiple, distinct subordinate agents (e.g., patients). Agents can be categorized into one or more groups with similar features, according to a prespecified, domain-informed definition (e.g., comparable \gls{ASCVD} risk). All agents share the same state and action spaces but have distinct reward and transition functions. 

Although we assume \gls{MDP} environments, they may not be known to the central controller. Consequently, our \gls{BDRL} algorithm is model-free, enabling it to derive optimal policies directly from interactions with the environment without explicitly estimating transition or reward functions.

\subsection{Algorithm}
Our \gls{BDRL} algorithm aims to learn the return distributions of $Z(s, a)$ for each state $s \in \mathcal{S}$ and action $a \in \mathcal{A}$ to identify optimal agent-level policies, while ensuring distributional consistency across similar agents. The process begins by partitioning an agent population $\mathcal{P} \coloneqq \{1,\ldots,N\}$ into $K$ groups based on their baseline features $X$ in Algorithm \ref{alg:cdrl-part-a}. Subsequently, the algorithm trains an initial \gls{DRL} procedure without return regularization to identify high-performing references in each group. The \gls{BDRL} procedure then penalizes discrepancies between the return distributions among two agents with the greatest distributional discrepancy in each of the groups using the 2-Wasserstein distance. To maintain training stability and satisfy a predefined return difference tolerance $\epsilon > 0$, we implement a post-update projection mechanism, which is detailed in Algorithm~\ref{alg:cdrl-part-b}.
Figure~\ref{fig:BDRL algo} summarizes the \gls{BDRL} algorithm.

\begin{figure}[h!]
    \centering
    \includegraphics[width=1\textwidth]{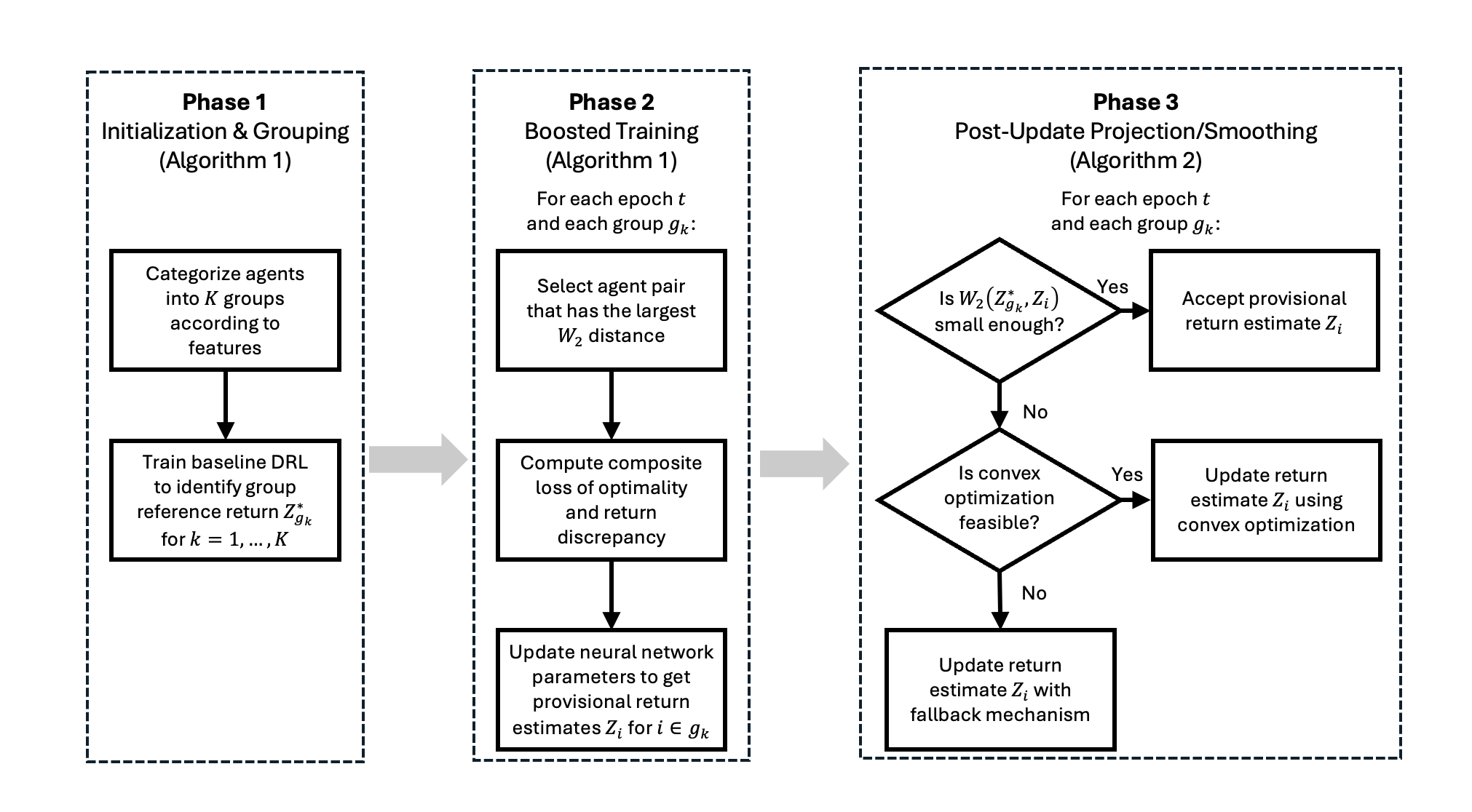} 
    \caption{Overview of the \gls{BDRL} algorithm.}
    \label{fig:BDRL algo}
\end{figure}

\subsubsection{Grouping and Training.}
Through the implementation of \gls{BDRL}, our goal is twofold: (i) to obtain improved returns for all agents and (ii) to ensure comparability of outcomes among agents who are similar in a domain-specific definition. As outlined in Algorithm~\ref{alg:cdrl-part-a}, we operationalize comparability via \emph{within-group boosting}, where agents are first partitioned into homogeneous groups $g_k \in G$ based on their baseline features $X$. For example, this feature vector may include critical indicators such as age and systolic \gls{BP} for the management of \gls{ASCVD}. 

This partitioning step is a deliberate safeguard: by clustering agents with similar profiles, we ensure that the 2-Wasserstein distance regularization is applied only to comparable agents. When agents are substantially different, it may not be meaningful to enforce comparable return distributions. For example, enforcing comparability between a 45-year-old with mild hypertension to an 80-year-old with multiple comorbidities may not be clinically or ethically desirable. Grouping agents by their baseline features $X$ allows \gls{BDRL} to optimize for high-performing group references that are realistic and achievable for every member within group $g_k$.

After grouping, we train \gls{BDRL} using data generated from agent $i$'s trajectories $\bm{\omega}_i = (\omega_i^1, \omega_i^2, ...,\omega_i^L)$. Each trajectory $\omega_i^l$ with $l = 1,2,..., L$ contains transitions $\omega_i^l = (s_0, a_0, r_0, s_{1}, a_{1}, r_{1},...)$, where actions $a_t$ follow an $\epsilon$-greedy policy \citep{sutton2018reinforcement}. As in standard \gls{DRL} \citep{bellemare2017distributional}, we initialize a neural network parameter $\theta$, which is used to approximate the return distributions $Z(s,a)$. This parameter is iteratively updated via the \gls{BDRL} training. For each agent $i \in \mathcal{P}$, we first train an optimality-only \gls{DRL} to get an unregularized return estimate $Z_i^{\mathrm{opt}}$. Then, for each group $g_k$, we select a high-performing in-group reference by identifying the agent with the largest expected return, i.e., $i_{g_k}^\star \in \arg\max_{i\in g_k}\mathbb{E}[Z_i^{\mathrm{opt}}]$. We then set $Z_{g_k}^{\!*}\gets Z_{i_{g_k}^\star}^{\mathrm{opt}}$ as the reference for subsequent boosting in group $g_k$. 

For each group, we sample a minibatch $\mathcal{B}$ of state-action pairs $(s_t,a_t)$ from its trajectories. Afterwards, we construct a loss function $ \mathcal{L}(\theta)$ to update the neural network parameter $\theta$. The loss contains two components: (a) an accuracy term that measures the divergence between the predicted return distribution and its distributional Bellman target, and (b) a constraint violation term that penalizes differences between the return distributions of two agents \(i,j\in\mathcal{P}\) drawn from the same group $k$ with the largest distribution discrepancy according to the 2-Wasserstein distance. The accuracy term is measured by the loss: 
\begin{align*}
    -\sum_{(s,a)\in\mathcal{B}}\sum_{d=1}^D\Large(&m^i_d(s,a)\log p^i_d(s,a)\\
    &+m^j_d(s,a)\log p^j_d(s,a)\Large),
\end{align*}
where $p^i_d(s,a)\in[0,1]$ is the probability mass assigned by agent $i$ to a return distribution support point $z_d$, and $m^i_d(s,a)\in[0,1]$ is the corresponding one-step distributional Bellman target. We use the return atoms $\{z_d\}_{d=1}^{k}$ to represent the support of a return distribution with a constant spacing denoted by $\Delta z \in \mathbb{R}_{>0} $. For a sampled transition $(s,a,r,s')$, the target distribution is induced by $r+\gamma Z_i(s',a')$, where $a'=\pi(s')$. Because $r+\gamma Z_i(s',a')$ generally does not lie on the fixed support $\{z_d\}_{d=1}^{k}$, we obtain $m^i_d(s,a)$ by projecting this shifted-and-discounted next-state distribution back onto the atoms via the standard categorical projection in \cite{bellemare2017distributional}. The constraint violation term is measured by the following loss: 
\begin{align*}
    \Wtwo(Z_i,Z_j) \coloneqq \Large(&\sum_{d=1}^D [F_i(z_d\mid s,a)\\
    &-F_j(z_d\mid s,a)]^2\,\Delta z\Large)^{1/2},
\end{align*}
where $\Wtwo$ denotes the 2-Wasserstein metric used to quantify the discrepancy between the cumulative distribution functions $F_i$ and $F_j$ of random variables $Z_i$ and $Z_j$, respectively. Unlike \gls{KL} and \gls{JS} divergence, the Wasserstein metric avoids the numerical instability associated with logarithmic probability ratios. This characteristic ensures more stable gradients and superior training convergence, as we will demonstrate in Section \ref{sec:Results}. We use the Lagrangian multiplier $\lambda \in \mathbb{R}_{\geq0}$ as a penalty that balances the trade-off between accuracy and within-group return comparability. Next, we update $\theta$ using gradient descent \citep{tan2021reinforcement} and predict an updated provisional estimate $Z(s,a)$ until convergence. We obtain the target policy by selecting the action $\pi^*(s) = \arg\max_a\mathbb E[Z(s, a)]\ \forall s$.

\begin{algorithm}[t]
\caption{Boosted DRL: Grouping and Training}
\label{alg:cdrl-part-a}
\small
\vskip4pt
\begin{algorithmic}[1]
\Require Training data $\mathcal{D}=\{(X_n,\omega_n)\}_{n=1}^N$; groups $K$; return atoms $\{z_d\}_{d=1}^{D}$ with spacing $\Delta z$; weight $\lambda>0$; tolerance $\epsilon>0$; mix parameters $\alpha\in(0,1)$ (solved), $\rho\in(0,1)$ (fallback); epochs $T$
\Ensure Policy $\pi$; per-patient return distributions $\{Z_n(s,a)\}, \forall (s, a)$

\Procedure{Boosted}{$\mathcal{D},K,\{z_d\},\lambda,\epsilon,\rho,T$}
  \State \textbf{Group:} partition patients by $X$ into $G=\{g_1,\dots,g_K\}$
  \State Initialize parameters $\theta$ of a distributional \gls{RL} model (\gls{DRL} head)
  \State \textbf{Baseline optimality:} for each $n$, train an optimality-only head to get $Z_n^{\mathrm{opt}}(s,a)$
  \For{$g_k=1$ \textbf{to} $K$}
      \State $n_{g_k}^\star \gets \arg\max_{n\in g_k}\ \mathbb{E}[Z_n^{\mathrm{opt}}]$
      \State $Z_{g_k}^{\!*}(s,a) \gets Z_{n_{g_k}^\star}(s,a)$ \Comment{group reference}
  \EndFor

  \For{$t=1$ \textbf{to} $T$} \Comment{main training with boosting}
    \For{$g_k=1$ \textbf{to} $K$}
      \State Sample minibatch $\mathcal{B}=\{(s,a)\}$ from $g_k$
      \State \textbf{Pair selection:} 
             $(i,j)\in\arg\max_{n_1\neq n_2\in g_k}\ \sum_{(s,a)\in\mathcal{B}}\Wtwo\big(Z_{n_1}(s,a),Z_{n_2}(s,a)\big)$
      \State For each $(s,a)\in\mathcal{B}$, compute $p^i_d(s,a),p^j_d(s,a)$ over atoms $\{z_d\}_{d=1}^D$;
             let $m^i_d(s,a),m^j_d(s,a)$ be target distributions (Bellman projections)
      \State \textbf{Constraint violation:} 
             $\Wtwo(Z_i,Z_j)=\big(\sum_{d=1}^D [F_i(z_d\mid s,a)-F_j(z_d\mid s,a)]^2\,\Delta z\big)^{1/2}$
      \State \textbf{Objective:}
             $\mathcal{L}(\theta)=\sum_{(s,a)\in\mathcal{B}}\!\Big[-\sum_{d=1}^D(m^i_d\log p^i_d+m^j_d\log p^j_d)+\lambda\,\Wtwo(Z_i,Z_j)\Big]$
      \State Update $\theta$ with gradient descent
      \State \textsc{PostUpdate}$(g_k,\mathcal{B},i,j)$ \Comment{Algorithm \ref{alg:cdrl-part-b}}
    \EndFor
  \EndFor

  \State \textbf{Deployment (reference agents):} For each reference agent, use
  \State \hspace{1.5em} $\pi_{g_k}^{*}(s)\in\arg\max_a \mathbb{E}[Z_{g_k}^{\!*}(s,a)]$
  \State \textbf{return} $\pi^*(s) = \arg\max_a\mathbb E[Z(s, a)]\ \forall s$, and $\{Z_n(s,a)\}_{n=1}^N, \forall(s,a)$
\EndProcedure
\end{algorithmic}
\end{algorithm}

\subsubsection{Post-Update Projection.}
Comparability should not come at the expense of performance. Algorithm~\ref{alg:cdrl-part-b} introduces a post-update projection/smoothing step to align each individual’s estimate with a high-performing in-group reference. 

\begin{algorithm}[t]
\caption{Boosted DRL: Post-Update Projection / Smoothing}
\label{alg:cdrl-part-b}
\small
\vskip4pt
\begin{algorithmic}[1]
\Require Group $g_k$; batch $\mathcal{B}$; updated agents $i \in \mathcal{P}$; refs $Z_{g_k}^{\!*}(s,a)$; tolerance $\epsilon$; fallback $\rho$; solver for $\alpha\in(0,1)$; 
\Ensure Smoothed returns $Z_i^{\text{new}}(s,a)$ for each agent $i \in \mathcal{P}$

\Procedure{PostUpdate}{$g_k,\mathcal{B},i$}
      \For{$i \in g_k$}
        \State Obtain updated $Z'_i(s,a)$ from the network
        \For{each $(s,a)\in\mathcal{B}$}
          \If{$\Wtwo(Z_{g_k}^{\!*}(s,a),Z'_i(s,a))<\epsilon$} \Comment{distributional distance constraint met}
             \State \textbf{Accept} $Z_i'(s,a)$
          \ElsIf{Problem \eqref{eq:convex_opt} is feasible} \Comment{convex optimization}
             \State Solve for $\alpha\in(0,1)$:
                   $\min_{\alpha\in(0,1)} \Wtwo(\alpha Z_i+(1-\alpha)Z'_i,\ Z'_i)$
                   s.t. $\Wtwo(\alpha Z_i+(1-\alpha)Z'_i,\ Z_{g_k}^{\!*})\le\epsilon$
             \State $Z_i'(s,a)\gets \alpha Z_i(s,a)+(1-\alpha)Z'_i(s,a)$
          \Else \Comment{fallback mechanism}
             \State $Z_i'(s,a)\gets \rho\,Z_i(s,a)+(1-\rho)\,Z_{g_k}^{\!*}(s,a)$
          \EndIf
        \EndFor
      \EndFor
\EndProcedure
\end{algorithmic}
\end{algorithm}

Given a network’s provisional estimate \(Z'_{i}\), we compare it to the group reference \(Z^{*}_{g_k}\). If \(\Wtwo\!\big(Z^{*}_{g_k}, Z'_{i}\big) < \epsilon\), the estimate is accepted. If the constraint is not met, we form a corrected estimate by interpolating between the agent's pre-update distribution $Z_i$ (carried over from the last time we saw the same state-action pair) and the post-update provisional distribution $Z'_i$ produced by the neural network. Interpolating keeps the update conservative: it nudges an infeasible provisional estimate $Z'_i$ back toward feasibility while preserving information from the latest gradient step. We seek an $\alpha \in (0, 1)$ by solving the following optimization problem:
\begin{equation}\label{eq:convex_opt}
\begin{aligned}
    \underset{\alpha \in (0,1)}{\text{min}} \quad & W_2(\alpha Z_i + (1-\alpha)Z'_i, Z'_i) \\
    \text{s.t.} \quad & W_2(\alpha Z_i + (1-\alpha)Z'_i, Z_{g_k}^{\!*}) \le \epsilon,
\end{aligned}
\end{equation}
where $\epsilon>0$ is the maximum allowable distributional distance between an agent's return and the group reference. The objective of this optimization is to find a combined estimate that satisfies the return parity constraint while remaining as close as possible to the new estimate $Z'_i$. This balance ensures the learned estimate is not distorted excessively but still satisfies the constraints. If a solution $\alpha$ is found, we update the estimate as $Z_i' \leftarrow \alpha Z_i + (1-\alpha)Z'_i$. However, if no solution $\alpha$ exists for this problem, we resort to a fallback mechanism. We update the estimate as a linear combination of the previous estimate $Z_i$ and the optimal group estimate $Z_{g_k}^*$. That is, we use \(Z_{i}'\gets \rho Z_{i}+(1-\rho)Z^{*}_{g_k}\) with \(\rho\in(0,1)\). Overall, the post-update projection maintains estimation accuracy while reducing within-group dispersion, thereby iteratively shrinking the gap between an agent’s current estimate and the group-optimal reference as the iterative index $t$ becomes larger. We measure the gap between these return estimates through $d^{(t)} \coloneqq W_2(Z_i^{(t)}, Z_{g_k}^{*})$, where $Z_i^{(t)}$ denotes the return estimate at iteration $t$.

\section{Theoretical Analysis and Properties}\label{sec:Theory}

We now provide a comprehensive analysis of our proposed algorithm, focusing on two critical dimensions: convergence stability and computational efficiency. The
proofs of our claims can be found in Appendix \ref{app:theory_results}. 


    
    

    


\subsection{Convergence and Stability Analysis}
In \gls{BDRL}, ensuring that the iterative updates of distributional estimates do not diverge is paramount. We define the alignment error $d^{(t)}$ as the 2-Wasserstein distance between these distributions denoted by $d^{(t)} \coloneqq W_2(Z_i^{(t)}, Z_{g_k}^{*})$. The $d^{(t)}$ metric serves as a scalar proxy for the estimate's optimality, representing the minimum ``transport cost" required to transform the current estimated outcome distribution of agent $i$ into the optimal group distribution of $Z_{g_k}^{*}$.

We now establish the stability of our update rule. A fundamental property of the Wasserstein metric is its behavior under mixture distributions $\alpha Z_i + (1-\alpha)Z'_i$ (or $\rho Z_i+(1-\rho)Z^{*}_{g_k}$). The following theorem establishes that mixing the current estimate with a target distribution effectively contracts this error space. 

\begin{theorem}[The Contraction of Mixtures]
\label{thm:Contraction of Mix}
The distance between the updated distributional estimate $Z_i^{(t)}$ and the optimal group estimate $Z_{g_k}^{*}$ is nonexpansive under the mixture operation. Moreover, the distance at the subsequent step satisfies a $d^{(t+1)} \leq \max \{d^{(t)}, \epsilon \}$ update, where $\epsilon>0$ is a predefined return difference tolerance.
\end{theorem}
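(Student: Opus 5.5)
The plan is to first prove a convexity inequality for the discrepancy $\Wtwo$ under mixtures of distributions. Then I would go through the three branches of Algorithm~\ref{alg:cdrl-part-b}, since every post-update estimate $Z_i^{(t+1)}$ is produced by exactly one of them.

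\textbf{Mixture inequality.} Here $\alpha Z_i+(1-\alpha)Z'_i$ denotes the mixture of the laws, so its CDF is $\alpha F_i+(1-\alpha)F'_i$. With the CDF-based definition of $\Wtwo$ used in the algorithm, $\Wtwo(P,Q)=\|F_P-F_Q\|_{L^2(\Delta z)}$ is a norm of CDF differences. For any reference $R$,
\begin{equation*}
F_{\alpha P+(1-\alpha)Q}-F_R=\alpha(F_P-F_R)+(1-\alpha)(F_Q-F_R).
\end{equation*}
The triangle inequality for the norm then gives
\begin{equation*}
\Wtwo(\alpha P+(1-\alpha)Q,R)\le \alpha\,\Wtwo(P,R)+(1-\alpha)\,\Wtwo(Q,R)\le\max\{\Wtwo(P,R),\Wtwo(Q,R)\}.
\end{equation*}
This is the claimed nonexpansiveness under mixing. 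If one instead reads $\Wtwo$ as the optimal-transport 2-Wasserstein distance, I would use a different route. Mix an optimal coupling of $(P,R)$ with one of $(Q,R)$ with weights $\alpha,1-\alpha$. This gives a coupling of the mixture with $R$, and hence $\Wtwo^2(\alpha P+(1-\alpha)Q,R)\le\alpha\Wtwo^2(P,R)+(1-\alpha)\Wtwo^2(Q,R)$, which yields the same max bound after taking square roots.

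\textbf{Case analysis.} Fix $(s,a)\in\mathcal{B}$, let $R=Z^*_{g_k}$, and write $d^{(t)}=\Wtwo(Z_i^{(t)},R)$.
\begin{itemize}
\item \emph{Accept branch.} By the acceptance test, $d^{(t+1)}=\Wtwo(Z'_i,R)<\epsilon$.
\item \emph{Convex-optimization branch.} The new estimate is feasible for \eqref{eq:convex_opt}, so the constraint directly gives $d^{(t+1)}\le\epsilon$.
\item \emph{Fallback branch.} The new estimate is $\rho Z_i^{(t)}+(1-\rho)R$. Applying the mixture inequality with $Q=R$, and using $\Wtwo(R,R)=0$, gives $d^{(t+1)}\le\rho\, d^{(t)}\le d^{(t)}$. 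Under the OT reading the bound is $\sqrt{\rho}\,d^{(t)}$ instead.
\end{itemize}
In every branch $d^{(t+1)}\le\max\{d^{(t)},\epsilon\}$, and the fallback branch contracts strictly with factor $\rho<1$.

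\textbf{Main obstacle.} The delicate step is making the estimates that enter the three branches precise. In particular, $Z_i^{(t)}$ must be the stored pre-update estimate, so that the fallback mixes $Z_i^{(t)}$ and not $Z'_i$. The mixture must also be understood at the level of laws or CDFs, not random variables; a pointwise convex combination of random variables would change the argument. I would state this convention explicitly before the case analysis.

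A second subtlety concerns the open interval $\alpha\in(0,1)$ in \eqref{eq:convex_opt}. The infimum may fail to be attained, but any feasible $\alpha$ returned by the solver still satisfies the constraint, which is all the bound needs. The strict-versus-weak inequality at $\epsilon$ in the accept branch is harmless for the same reason.
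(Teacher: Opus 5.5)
Your proposal is correct and follows the paper's proof. It uses the same three-branch case analysis: the accept and convex-optimization branches are bounded by $\epsilon$ directly from the acceptance test and the constraint. The fallback branch is handled by mixing an optimal coupling of $(Z_i^{(t)},Z^*_{g_k})$ with the diagonal coupling of $Z^*_{g_k}$, which gives $d^{(t+1)}\le\sqrt{\rho}\,d^{(t)}$.

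Your second reading, the CDF-norm one, is a worthwhile addition. There the triangle inequality gives the sharper factor $\rho$. This matters because the formula the algorithm actually computes is an $L^2$ distance between CDFs, which is not the transport $W_2$ in general.
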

This contraction property is the driving force behind the algorithm's convergence. Because the mixture operation strictly reduces the Wasserstein distance by a factor $\rho < 1$ (whenever the error exceeds $\epsilon$), the sequence of errors is bounded by a geometric progression: $d^{(t)} \leq \rho^t d^{(0)}$. This geometric decay guarantees that the distributional estimate does not merely oscillate but is systematically pulled towards the optimal distribution, ensuring that it enters the $\epsilon$-neighborhood in a finite number of steps $N \ge \lceil \ln(\epsilon/d^{(0)})^2/\ln\rho \rceil$, where $ \lceil x \rceil \coloneqq \min\{y \in \mathbb{Z} | y \ge x\} $.

\paragraph{Proof Sketch.} The proof of Theorem \ref{thm:Contraction of Mix} relies on the convexity of the squared Wasserstein distance. By constructing a mixture distribution $\alpha Z_i + (1-\alpha)Z_i'$, we utilize the linearity of marginals to show that the cost of the optimal transport plan for the mixture is bounded by the convex combination of the costs of the individual components. This geometric property ensures that the mixture never lies ``outside" the path connecting the current estimate and the target. Theorem \ref{thm:Contraction of Mix} implies that if a perfect update is impossible, a partial update will reduce the geometric distance to the target $Z^{*}_{g_k}$, providing a ``safety mechanism" often missing in standard distributional Bellman updates.

\paragraph{Healthcare Implications.} As an example, consider a clinician adjusting a medication dosage. They rarely switch from their current protocol to a radically different one instantly \citep{caffrey2020art}. Instead, they may ``mix" new evidence into their current protocol. This theorem guarantees that such intermediate steps mathematically move a patient's outcome distribution closer to the optimal recovery profile, preventing erratic treatment swings.

\begin{proposition}[$W_2$ Distance Convergence]
\label{thm:Nonexpansive}
The update rule guarantees that the $W_2$ distance between the current estimate and the optimal estimate remains nonexpansive. Consequently, the eventual $W_2$ distance will stay within the $\epsilon$ range.
\end{proposition}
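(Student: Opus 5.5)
We derive the statement directly from Theorem~\ref{thm:Contraction of Mix}, which gives the one-step inequality $d^{(t+1)} \le \max\{d^{(t)},\epsilon\}$. The argument is a case analysis over the three branches of Algorithm~\ref{alg:cdrl-part-b}, followed by an induction on $t$. The basic tool throughout is the convexity of $\mu \mapsto W_2^2(\mu,\nu)$ under mixtures of measures. Take couplings $\pi_1$ of $(Z_i,\nu)$ and $\pi_2$ of $(Z'_i,\nu)$. Then $\alpha\pi_1+(1-\alpha)\pi_2$ couples the mixture with $\nu$, and hence
\[
W_2^2\bigl(\alpha Z_i+(1-\alpha)Z'_i,\nu\bigr)\le \alpha W_2^2(Z_i,\nu)+(1-\alpha)W_2^2(Z'_i,\nu).
\]
We apply this with $\nu=Z^{*}_{g_k}$.

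\textbf{Case analysis.} Fix an agent $i$ and a pair $(s,a)$.

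\emph{Accept branch.} If $W_2(Z^{*}_{g_k},Z'_i)<\epsilon$, the estimate is accepted, so $d^{(t+1)}<\epsilon$ immediately.

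\emph{Convex-program branch.} If problem \eqref{eq:convex_opt} is feasible, its constraint is exactly $d^{(t+1)}\le\epsilon$.

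\emph{Fallback branch.} The update is $Z_i^{(t+1)}=\rho Z_i^{(t)}+(1-\rho)Z^{*}_{g_k}$. The convexity bound, together with $W_2(Z^{*}_{g_k},Z^{*}_{g_k})=0$, gives
\[
\bigl(d^{(t+1)}\bigr)^2\le \rho\,\bigl(d^{(t)}\bigr)^2,\qquad\text{so}\qquad d^{(t+1)}\le\sqrt{\rho}\,d^{(t)}\le d^{(t)}.
\]

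In all three branches, $d^{(t+1)}\le\max\{d^{(t)},\epsilon\}$. This is nonexpansiveness, and it is also what Theorem~\ref{thm:Contraction of Mix} asserts.

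\textbf{Induction.} Let $M_t=\max\{d^{(t)},\epsilon\}$. The one-step bound gives $M_{t+1}\le M_t$, so $(M_t)$ is nonincreasing and bounded below by $\epsilon$. Two consequences follow:
\begin{itemize}
\item If $d^{(T)}\le\epsilon$ for some $T$, then $d^{(t)}\le\epsilon$ for all $t\ge T$, so the $\epsilon$-ball is absorbing.
\item Outside the ball, every accept or convex step lands inside it at once, and every fallback step multiplies $d$ by at most $\sqrt{\rho}<1$.
\end{itemize}
Hence after at most $\lceil 2\ln(\epsilon/d^{(0)})/\ln\rho\rceil$ consecutive fallback steps the error is at most $\epsilon$. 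Eventually $d^{(t)}\le\epsilon$, which is the claimed ``stays within the $\epsilon$ range.''

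\textbf{Main obstacle.} The difficulty is what happens between projection steps. The gradient step in Algorithm~\ref{alg:cdrl-part-a} may move $Z_i$ far from the reference, so nonexpansiveness has to be stated for the post-projection iterates. The accept and convex branches control those iterates regardless of $Z'_i$, which handles this point. Since the fallback mixes $Z_i^{(t)}$ rather than $Z'_i$, I define $d^{(t)}$ on post-projection estimates; with that choice the argument closes.

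A second subtlety is the reference itself. We must assume $Z^{*}_{g_k}$ is held fixed after the baseline stage, as Algorithm~\ref{alg:cdrl-part-a} indicates. Otherwise the target moves and the induction breaks.

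Finally, the paper's $W_2$ is the $L^2$ distance between CDFs on the atom grid, not the optimal-transport $W_2$. For that version, convexity is even easier: the CDF of a mixture is the same convex combination of the CDFs, and $\|\cdot\|_2$ is a norm. This gives the linear bound $d^{(t+1)}\le\rho\, d^{(t)}$ directly. I would present this version as the primary argument, since it matches the paper's definition.
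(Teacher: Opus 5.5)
Your proof is correct and uses the same ingredients as the paper's: the three-branch case split behind Theorem~\ref{thm:Contraction of Mix}, the mixture-coupling bound of Property~\ref{prop:LinearMixtureContraction} for the fallback, and the entry time $N=\lceil 2\ln(\epsilon/d^{(0)})/\ln\rho\rceil$. Where you differ is the key step, and your version is the sound one.

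The paper treats $d^{(t+1)}$ as the minimum of $W_2(\tilde Z(\alpha'),Z^{*}_{g_k})$ over the mixture segment. It then bounds this by $W_2(Z_i',Z^{*}_{g_k})\le\sqrt{\rho}\,d^{(t)}$, citing Property~\ref{prop:LinearMixtureContraction}. That property, however, concerns the fallback mixture $\rho Z_i^{(t)}+(1-\rho)Z^{*}_{g_k}$, not the raw network output $Z_i'$. In fact no uniform per-step factor $\sqrt{\rho}$ holds: if $\epsilon<d^{(t)}<\epsilon/\sqrt{\rho}$, an accepted $Z_i'$ may sit at a distance in $(\sqrt{\rho}\,d^{(t)},\epsilon)$.

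Your dichotomy avoids that false claim and still gives the same $N$: accept and convex steps land in the ball at once, while fallback steps contract by $\sqrt{\rho}$. Your monotone quantity $M_t=\max\{d^{(t)},\epsilon\}$ also supplies the absorption needed for ``stays within.'' The paper's proof only argues that the iterate ``enters'' the ball.

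Your closing remark is also right. The algorithm's CDF-based quantity is the Cram\'er ($\ell_2$) distance rather than the optimal-transport $W_2$. For it, the fallback gives $F_{\tilde Z}-F_{Z^{*}_{g_k}}=\rho\,(F_{Z_i^{(t)}}-F_{Z^{*}_{g_k}})$ exactly, so the factor is $\rho$. This matches the main text's $d^{(t)}\le\rho^t d^{(0)}$, whereas the appendix's $\sqrt{\rho}$ comes from the optimal-transport reading.
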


While Theorem~\ref{thm:Nonexpansive} ensures local stability, our ultimate goal is long-term convergence. By recursively applying the nonexpansiveness property, we can derive a global convergence guarantee. The result in Proposition \ref{thm:Nonexpansive} confirms that each agent's learned distribution asymptotically approaches the performance of the best-performing reference in its group as training progresses, up to the tolerance level $\epsilon$. Furthermore, our result addresses the ``catastrophic forgetting'' problem often encountered in \gls{DRL} \citep{schwarz2018progress}, where ongoing updates under a changing data distribution can overwrite previously learned value/distributional estimates and degrade performance on previously mastered states. Our guarantee shows that enforcing distributional parity constraints does not destabilize the optimal policy, thereby reducing the risk of such forgetting.

\paragraph{Proof Sketch.} The proof of this claim examines two cases. If an unconstrained update $Z'$ is within the valid $\epsilon$-ball region, stability is trivial. If $Z'$ violates the constraint, we project it back via the mixture operation defined in Theorem~\ref{thm:Contraction of Mix}. Since the mixture path is convex and contractive, the projected point $Z^{(t+1)}$ is closer to the optimum than the previous point $Z^{(t)}$.

\paragraph{Healthcare Implications.} In critical care, ``do no harm" is the ultimate priority. Proposition \ref{thm:Nonexpansive} provides a theoretical guarantee that the algorithm's update to a treatment policy will never result in a distribution of outcomes that is worse than the current policy, effectively serving as a digital guardrail against unsafe exploration.




\subsection{Computational Efficiency via Convex Reformulation}
While the theoretical convergence properties are robust, a significant practical challenge in Algorithm~\ref{alg:cdrl-part-b} is the solution of the optimization problem involving Wasserstein constraints. Direct optimization over probability measures is often computationally prohibitive \citep{peyre2019computational}. To address this challenge, we demonstrate that the problem can be reduced to a standard form.

\begin{theorem}[Convex Quadratic Transformation]
\label{thm:Convex Opti}
The conditional optimization problem:
\begin{align*}
& \min_{\alpha} \ W_2(\alpha Z_i(s,a) + (1-\alpha)Z_i'(s,a),Z_i'(s,a)) \\
& \text{s.t.} \ \ W_2(\alpha Z_i(s,a) + (1-\alpha)Z_i'(s,a),Z_i^{*}(s,a)) \leq \epsilon \\
& \ \ \ \ \ \ \alpha \in (0,1)
\end{align*}
can be reformulated as a convex quadratic optimization problem.
\end{theorem}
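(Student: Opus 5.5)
We use the discretized form of $\Wtwo$ fixed in Section~\ref{sec:solution approach}: for distributions on the common atoms $\{z_d\}_{d=1}^D$ with spacing $\Delta z$,
\[
\Wtwo(Z,Y)^2=\sum_{d=1}^D [F_Z(z_d)-F_Y(z_d)]^2\,\Delta z .
\]
The first step is to note that the mixture $\alpha Z_i+(1-\alpha)Z_i'$ is a mixture of laws. Its CDF is therefore affine in $\alpha$:
\[
F_\alpha(z_d)=\alpha F_i(z_d)+(1-\alpha)F_i'(z_d)=F_i'(z_d)+\alpha\,u_d,\qquad u_d\coloneqq F_i(z_d)-F_i'(z_d).
\]
This turns every quantity in the problem into a finite-dimensional function of the scalar $\alpha$. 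Optimization over probability measures reduces to optimization over an interval.

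Next we compute both Wasserstein terms as explicit quadratics. The objective is
\[
\Wtwo(F_\alpha,F_i')^2=\alpha^2\|u\|^2\Delta z .
\]
Since the square root is monotone on $\mathbb{R}_{\ge 0}$, minimizing $\Wtwo$ is equivalent to minimizing its square. For the constraint, write $v_d\coloneqq F_i'(z_d)-F^*(z_d)$. Then
\[
\Wtwo(F_\alpha,F^*)^2=\Delta z\big(\|u\|^2\alpha^2+2\langle u,v\rangle\alpha+\|v\|^2\big),
\]
and the constraint $\Wtwo\le\epsilon$ is equivalent (both sides are nonnegative) to
\[
\|u\|^2\alpha^2+2\langle u,v\rangle\alpha+\|v\|^2-\epsilon^2/\Delta z\le 0 .
\]
The reformulated problem is therefore
\[
\min_{\alpha}\ \|u\|^2\alpha^2 \quad\text{subject to a quadratic inequality and the interval } \alpha\in(0,1).
\]

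Convexity then follows from the leading coefficient $\|u\|^2\ge 0$, which makes both the objective and the constraint function convex quadratics in $\alpha$. The feasible set is the intersection of a sublevel set of a convex quadratic, which is an interval, with $(0,1)$. The result is a convex QCQP in one variable. Equivalently, it can be written as a convex QP with box constraints on $\alpha\in[\underline\alpha,\overline\alpha]\cap(0,1)$, where $\underline\alpha,\overline\alpha$ are the roots of the constraint quadratic. The degenerate case $u=0$ (that is, $Z_i=Z_i'$ on the atoms) makes the problem constant in $\alpha$, and it is trivially convex. In that case it is feasible iff $\|v\|^2\Delta z\le\epsilon^2$. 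Because the objective $\alpha^2\|u\|^2$ is increasing on $(0,1)$, the solution is the smallest feasible $\alpha$, namely $\max\{\underline\alpha,0\}$, whenever this lies in $(0,1)$.

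The main obstacle is justifying the exact identity for $\Wtwo$. The true 2-Wasserstein distance in one dimension is the $L^2$ distance between quantile functions, not between CDFs. The paper's definition is the $L^2$ CDF (Cramér-type) distance. I would state explicitly that the theorem is proved for the metric as defined in the algorithm. For genuine $\Wtwo$, quantiles of a CDF mixture are not affine in $\alpha$, so the clean quadratic structure breaks. In that case I would fall back on convexity of $\Wtwo^2$ under mixtures, via the joint convexity used in Theorem~\ref{thm:Contraction of Mix}. That would only give a convex (not quadratic) problem.

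The second subtlety is the open interval $(0,1)$. I would handle it by replacing it with $[\delta,1-\delta]$, or by noting that the infimum over the closure is attained and is used as the update.
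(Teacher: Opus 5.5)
Your proposal is correct and follows essentially the same route as the paper's proof. In both, the CDF of the mixture on the common grid is affine in $\alpha$, so the squared CDF-based distance yields the objective $A\alpha^2$ and the constraint $A\alpha^2+2B\alpha+C\le \mathrm{const}$, which is convex because the leading coefficient is nonnegative. Your extra care goes beyond what the paper's proof addresses. You treat the degenerate case $A=0$, which the paper rules out by simply asserting $A>0$. You use the $\Delta z$ weighting from the algorithm's definition, where the appendix writes $(\Delta z)^2$ (a harmless positive rescaling). And you flag that this CDF formula is a Cram\'er-type distance rather than the quantile-based $W_2$, for which convexity survives but the quadratic structure does not.
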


The transformation in Theorem \ref{thm:Convex Opti} enables efficient numerical solutions with standard quadratic programming solvers, substantially improving the computational efficiency of the training loop. For example, the optimization problem can be solved efficiently using Gurobi Optimizer software by leveraging the computational method proposed by \cite{li2025efficient}.

Traditional distributional methods often rely on projection steps that are computationally expensive or use approximation heuristics that lack strict constraint guarantees (e.g., quantile regression losses). By reducing our projection to a single dimension quadratic program, we achieve exact constraint satisfaction with negligible computational overhead. Establishing the convex quadratic structure is nontrivial because it requires carefully characterizing how the $W_2$ distance behaves under linear mixtures of random variables while holding the supports fixed. In particular, its proof must rigorously justify that the Wasserstein metric remains well-defined and differentiable with respect to the mixing parameter $\alpha$. In addition, no pathological cases should arise when distributions have non-overlapping supports or degenerate mass points.

\paragraph{Proof Sketch.} Under fixed distributional support, the squared Wasserstein distance $W_2^2$ between two distributions becomes a weighted sum of squared differences of their cumulative distribution functions. Since the mixture distribution is linear in $\alpha$, the squared difference becomes quadratic in $\alpha$. The objective and constraints thus reduce to the form $A\alpha^2 + B\alpha + C \leq D$ for $A >0$ and $B, C, D \in \mathbb{R}$, which is a standard quadratic program.

\paragraph{Healthcare Implications.} Theorem \ref{thm:Convex Opti} considerably improves the efficiency of the “safety check” in \gls{BDRL}. These computational gains enable us to apply our algorithm to large collections of agents, such as the cohort in our case study, representing more than 16 million patients.

\section{Case Study}\label{sec:Results}
This section presents the empirical evaluation of our proposed algorithm, considering a physician who acts as a central controller optimizing policies for distinct patients as ``subordinate agents." We begin by detailing the \gls{MDP} parameterization and data sources that underpin our case study. Subsequently, we analyze the performance of the \gls{BDRL} algorithm, demonstrating its ability to optimize patient outcomes while satisfying within-group return difference constraints. 

It is important to emphasize that the \gls{BDRL} algorithm operates in a completely model-free manner. We construct \glspl{MDP} solely to serve as generative models for simulating patient trajectories. Our algorithm learns the optimal policy directly from the generated trajectories, without explicitly learning the underlying transition functions or reward functions. Furthermore, to capture realistic population heterogeneity, we model each patient as a distinct \gls{MDP}. While the state and action spaces are shared across the population, the specific transition probabilities and reward structures remain unique to each individual.

\subsection{Markov Decision Process Formulation Parameters}
Adapting the framework established by \cite{schell2016data}, we formulate the hypertension treatment planning problem as an infinite-horizon \gls{MDP}. Although major clinical guidelines for hypertension often emphasize a 10-year risk horizon for evaluation and communication of cardiovascular risk \citep{whelton2018guideline}, this window is primarily intended for risk assessment rather than for prescribing the full lifetime sequence of antihypertensive medications. In contrast, chronic hypertension management is an ongoing process that typically extends over the patient’s lifetime, motivating an infinite-horizon formulation \citep{zhou2024policy}. Our objective is to identify a stationary treatment policy that maximizes patients' total discounted \glspl{qaly}. Motivated by prior work in this area \citep{garcia2024interpretable, oh2022precision}, we consider discrete yearly decision epochs $t = 0,1,2,\dots$, terminating only upon transition to an absorbing death state $s_{\text{death}}$. Table~\ref{tab:base-case-params} lists our parameters and their sources. The components of the \glspl{MDP} $(\mathcal{S}, \mathcal{A}, f, r, \gamma)$ used in our cases are defined as follows:

\begin{itemize}
    \item \textbf{State Space ($\mathcal{S}$):} The state space encodes each patient’s \gls{ASCVD} risk profile, including demographics (i.e., age, sex, race, smoking status), clinical measurements (i.e., \gls{BP} readings, diabetes status, cholesterol), and overall health condition. Patients' overall health condition accounts for their history of \gls{ASCVD} events and it is classified into following ten mutually-exclusive categories: healthy (i.e., no history of \gls{ASCVD}), history of myocardial infarction but no adverse event in the current year, history of stroke but no adverse event in the current year, history of myocardial infarction and stroke but no adverse event in the current year, survival of a myocardial infarction, survival of a stroke, death from a non-\gls{ASCVD} related cause, death from a myocardial infarction, death from stroke, and dead ($s_{\text{death}}$).

    \item \textbf{Action Space ($\mathcal{A}$):} Following the established methodology of \citet{sussman2013using}, we abstract the action space to represent the \textit{intensity} of treatment rather than specific pharmacological classes. The action space consists of combinations ranging from 0 to 5 antihypertensive medications at half or standard doses, yielding $|\mathcal{A}| = 21$ distinct treatment choices. This abstraction is grounded in clinical evidence suggesting that the cumulative number of standard doses is the primary determinant of blood pressure reduction \citep{law2003value,blood2014blood}. By modeling treatment intensity, we maintain computational tractability while capturing the essential decision of therapy intensification.
    
    \item \textbf{Transition Kernel ($P$):} The stationary transition kernel $P$ models the progression of patient health. It is derived from risk models for \gls{ASCVD} events \citep{yadlowsky2018clinical}, treatment efficacy \citep{blood2014blood}, and mortality rates \citep{national2017health, kochanek2023national}. Although patient risk profiles naturally evolve over time, we model $P$ as stationary by incorporating age explicitly within the state space $s$. This idea ensures that the system dynamics remain consistent, while the resulting state transition update dynamically as the patient ages within the state space. Consistent with prior work \citep{garcia2024interpretable,schell2016data, marrero2024model}, we assume independence between myocardial infarction and stroke events, weighting their risks at 70\% and 30\% respectively \citep{tsao2023heart}. In addition, we assume that patients with a prior history of \gls{ASCVD} events are more likely to experience subsequent heart attacks or strokes. We reflect this assumption by increasing patients' heart attack and stroke odds when they have a history of either \gls{ASCVD} event \citep{bronnum2001survival,burn1994long}.
    
    \item \textbf{Reward Function ($r$):} The reward is defined as the quality-of-life weight associated with the patient’s health condition \citep{kohli2019cost}, minus the medication disutility for treatment at half or full dose \citep{sussman2013using}.

    \item \textbf{Discount Factor ($\gamma$):} To reflect the preference for immediate over delayed health benefits, future rewards are discounted by a factor of $\gamma = 0.97$ \citep{neumann2016cost}.
\end{itemize}

\begin{table}[t]
\centering
\small 
\setlength{\tabcolsep}{4pt} 
\caption{Base case parameters.}
\label{tab:base-case-params}
\begin{tabularx}{\linewidth}{l c Y}
\toprule
Parameter & Value & Source \\
\hline
BP reduction: standard dose (half dose) & & \\
\hspace{3mm} Systolic BP & 5.5 (3.7) mm Hg & \citep{blood2014blood}, \citep{sussman2013using}\\
\hspace{3mm} Diastolic BP & 3.3 (2.2) mm Hg & \citep{blood2014blood}, \citep{sussman2013using}\\
Risk for ASCVD events & Varies by patient & \citep{yadlowsky2018clinical} \\
ASCVD risk reduction: standard dose (half dose) & & \\
\hspace{3mm} Myocardial infarction & 13\% (7\%) & \citep{blood2014blood}, \citep{sussman2013using} \\
\hspace{3mm} Stroke & 21\% (14\%) & \citep{blood2014blood}, \citep{sussman2013using} \\
ASCVD risk due to myocardial infarction & 70\% & \citep{tsao2023heart} \\
Mortality from ASCVD events & & \\
\hspace{3mm} Myocardial infarction & Varies by patient & \citep{national2017health} \\
\hspace{3mm} Stroke & Varies by patient &  \citep{national2017health} \\
Treatment-related disutility & & \\
\hspace{3mm} Half dose & 0.001 & \citep{schell2016data}, \citep{sussman2013using} \\
\hspace{3mm} Full dose & 0.002 & \citep{schell2016data}, \citep{sussman2013using} \\
Life expectancy & Varies by patient & \citep{kochanek2023national} \\
Non-ASCVD mortality & Varies by patient & \citep{kochanek2023national} \\
\hline
\end{tabularx}
\end{table}

\subsection{Data Source and Parameterization}
To parameterize our generative \glspl{MDP}, we utilize data from the National Health and Nutrition Examination Survey spanning 2009 to 2016 \citep{stierman2021national}. Our primary analysis cohort consists of Black and White individuals aged 50 to 54 years without a prior history of stroke or myocardial infarction, representing a weighted population of approximately 16.72 million. We select this specific age demographic because it represents a relatively young population with a high prevalence of \gls{ASCVD} risk factors, offering a significant window for preventative intervention via hypertension treatment \citep{tsao2023heart, whelton2018guideline}.

Missing data are imputed using the MissForest package in R. To model the longitudinal evolution of patient risk factors, we estimate progression trajectories using linear regression. We regress untreated systolic blood pressure, total cholesterol, high-density lipoprotein, and low-density lipoprotein against demographic and health indicators, including age, sex, race, smoking status, and diabetes status. To ensure patient-specific accuracy, the intercept term of each regression model is calibrated by adding the residual difference between the observed clinical values and the fitted estimates. This health progression modeling allows us to reduce our state space to the ten mutually-exclusive health conditions \citep{garcia2024interpretable,schell2016data, marrero2024model}.

\subsection{Simulation Framework}
Because our constraints are most informative when comparing patients with similar clinical profiles, we partition patients into groups based on baseline health conditions meaningful for selecting treatment plans. Following the major hypertension treatment guidelines \citep{whelton2018guideline}, we apply $k$-means clustering to patients' demographics and clinical measurements considered risk factors for \gls{ASCVD} \citep{niedermayer2024clusters}. We identify $k=3$ as the optimal number of clusters for our population (Appendix \ref{app:clustering}). The resulting groups correspond to distinct profiles, which we interpret as patients with \emph{low risk}, \emph{intermediate risk}, and \emph{high risk} for \gls{ASCVD} events. 

We construct a simulation framework to evaluate our \gls{BDRL} algorithm. Before computing treatment plans, we evaluate each patient’s annual risk of ASCVD events. These risk estimates are then used to infer transition kernels and specify the corresponding dynamics in the \glspl{MDP}. Afterwards, based on the estimated transition and outcome models, we construct a stochastic behavioral policy using an $\epsilon$-greedy rule over the model-implied action value function $Q(s,a)=\mathbb{E}[Z(s, a)]$ \citep{sprouts2022development}: with probability $1-\epsilon$ it selects the highest-valued feasible action, and with probability $\epsilon$ it selects a feasible action uniformly at random. Trajectories generated under this policy are subsequently used as input data for \gls{BDRL}, which learns an estimate of the return distribution. Based on previous literature \citep{pham2017predicting}, we use $L=42$ trajectories for all agents. 

To ensure the statistical reliability of our simulation outputs, we first determine the appropriate minibatch size for partitioning the simulation trajectories. Typically, the batch size ranges from 64 to 4,096 \citep{schulman2017proximal}. The details of our batch size determination process are provided in Appendix \ref{app:convergence_diagnostics}. Based on this batch specification, we first train a base \gls{BDRL} algorithm with $\lambda=0.1$ using the simulated trajectory data. Figure~\ref{fig:simulation_framework} shows our simulation framework.

\begin{figure}[h!]
    \centering
    \includegraphics[width=1\textwidth]{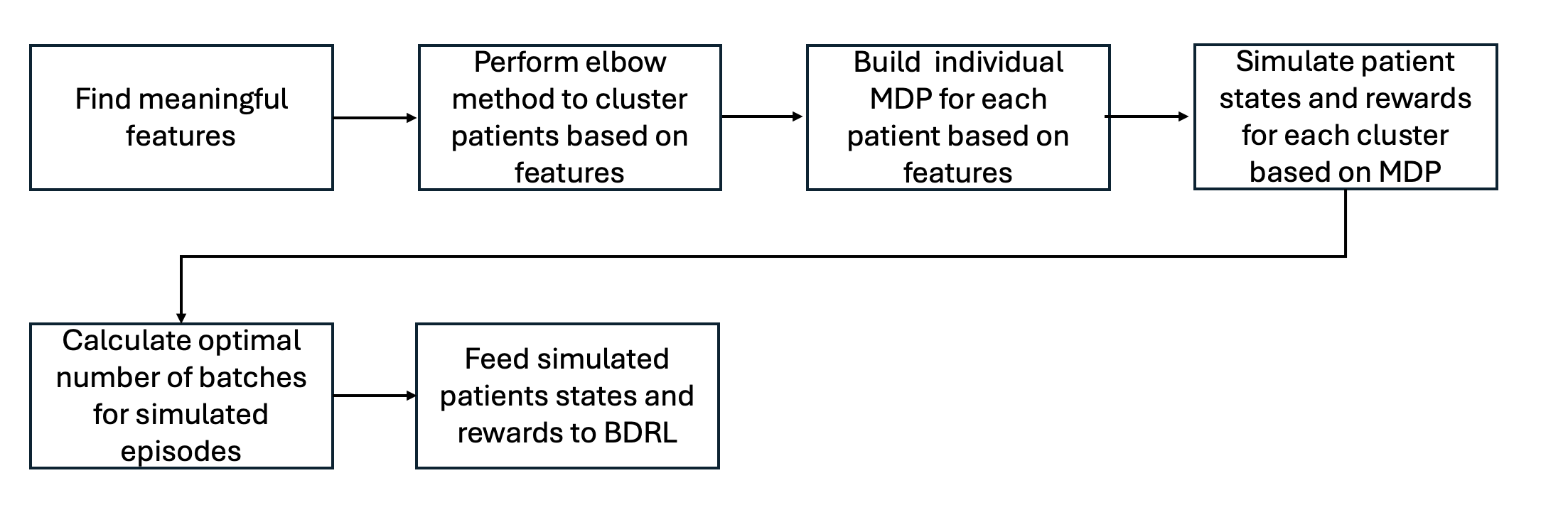} 
    \caption{Overview of the simulation framework.}
    \label{fig:simulation_framework}
\end{figure}

\subsubsection{Analysis.} Our analysis begins by quantifying the shift before and after \gls{BDRL} training, evaluating whether the policy behavior among agents within the same groups becomes closer. We also evaluate the convergence of return distributions among patients within the same risk group by tracking the trajectory of the $W_2$ distance between its two most different patients throughout the learning process in Appendix \ref{app:convergence_return_dist}. Next, we compare \gls{BDRL} against baseline methods, including standard \gls{DRL}, Deep Q-Network, and Q-learning \citep{oh2022precision, ghasemi2025personalized}. We subsequently train \gls{BDRL} over a range of penalty weights $\lambda$ to evaluate their effect on the outcomes of vulnerable patients. Next, we perform sensitivity analyses with respect to the return-difference tolerance $\epsilon$ and the fallback weight $\rho$ to study the impact of hyperparameter choices. Finally, we explore the consequences of using \gls{KL} or \gls{JS} divergences to regularize return distribution differences. 











\subsection{Numerical Results}
In this section, we evaluate the clinical and population-level implications of \gls{BDRL}. Consistent with the chronic nature of hypertension management, we present outcomes for patients in the initial low-risk, intermediate-risk, and high-risk (e.g., following a stroke) subgroup defined by $k$-means clustering, to demonstrate the versatility of the policy across different stages of disease progression.

\subsubsection{Insights from Return Distribution Boosting.}
We begin by comparing the unconstrained \gls{DRL} against our proposed \gls{BDRL}. This comparison isolates the specific clinical gains attributable to the $W_2$ regularization and the boosting mechanism. 

\paragraph{Policy Behavior.} Figure~\ref{fig:treatment_distn} illustrates the distribution of the prescribed treatment intensities before and after boosting for the most vulnerable patient (i.e., the lowest-performing agent) compared to the most resilient patient (i.e., the highest-performing agent) across our risk clusters. The boosted policy reallocates treatment intensity to reduce outcome variance: it shifts towards milder interventions for the low-risk group, moderate interventions for the intermediate-risk group, and slightly more aggressive interventions for the high-risk group. 

\begin{figure}[h!]
    \centering
    \includegraphics[width=0.9\textwidth]{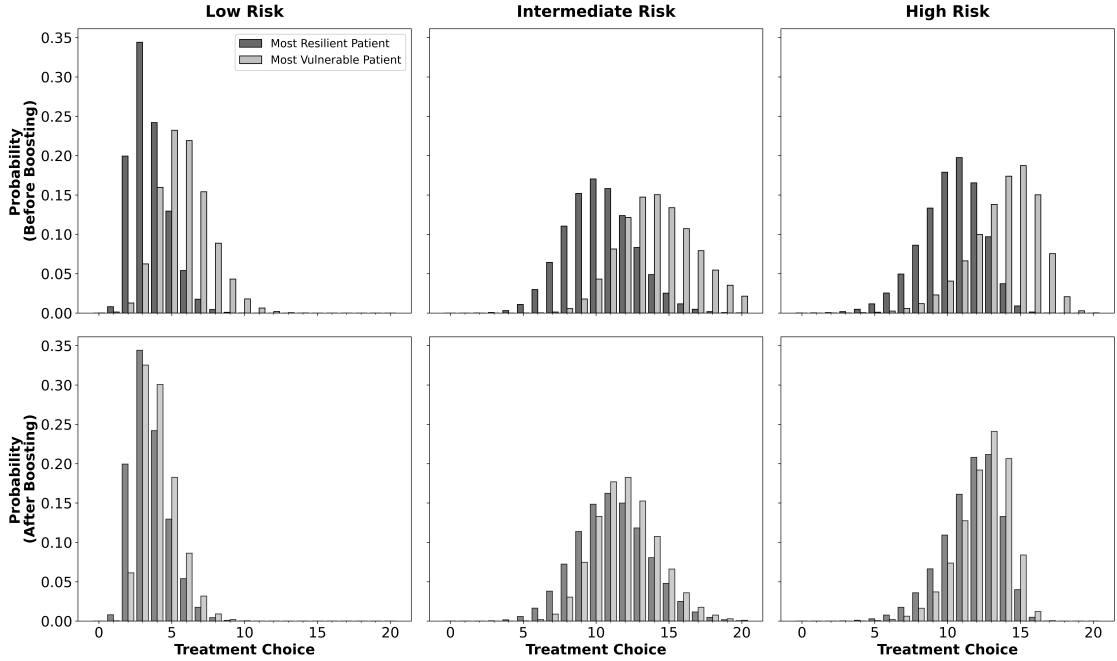} 
    \caption{Probability of action selection during training for the most vulnerable patient and the most resilient patient across risk clusters before and after boosting. The treatment choice ranging from 0 to 20 represents the index of each of the 21 actions considered in the \glspl{MDP}.}
    \label{fig:treatment_distn}
\end{figure}

\paragraph{Health Outcomes.} The similarity of treatment decisions translates into consistent return distributions, ensuring comparable health outcomes for patients within the same group. The resulting \glspl{qaly} by risk group are shown in Figure \ref{fig:Learned Distribution Plots}. The plot shows that \gls{BDRL} brings return distributions within each group closer together, promoting congruent outcomes across patients.

\begin{figure}[h!]
    \centering
    \includegraphics[width=0.9\linewidth]{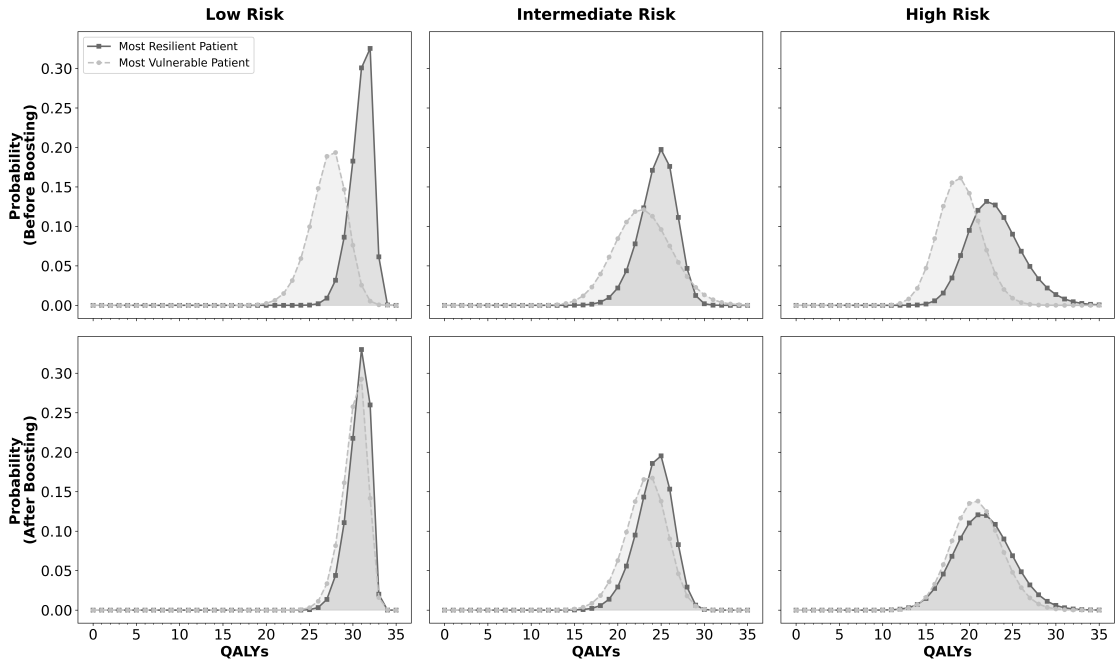}
    \caption{\textbf{\boldmath Learned QALYs distributions of the two most different patients in each group.}}
    \label{fig:Learned Distribution Plots}
\end{figure}

Table~\ref{tab:baseline_comparison_groups} compares the cumulative discounted QALYs of the proposed \gls{BDRL} algorithm against standard \gls{RL} baselines. Results are stratified by risk cluster (low, intermediate, high) and further partitioned into the most resilient, median, and the most vulnerable patients within each cohort. Compared with \gls{DRL}, Deep Q-Network and Q-learning, \gls{BDRL} achieves comparable or higher \glspl{qaly} across all three clusters while preserving performance for the most resilient patients.

\begin{table}[h!]
    \centering
    \caption{Comparison of the most resilient, median, and most vulnerable patients across risk groups.}
    \label{tab:baseline_comparison_groups}
    
    \setlength{\tabcolsep}{6pt}
    \begin{tabular}{llccc}
        \toprule
        \textbf{Algorithm} & \textbf{Risk Group} & \textbf{Most Resilient} & \textbf{Median} & \textbf{Most Vulnerable} \\
        \midrule
        \multirow{3}{*}{BDRL (Base)} 
                          & Low    & 32.35 & 32.19 & 31.82 \\
                          & Intermediate & 26.63 & 26.35 & 26.26 \\
                          & High   & 19.78 & 19.63 & 19.37 \\
        \midrule
        \multirow{3}{*}{DRL ($\lambda=0$)} 
                          & Low    & 32.35 & 31.62 & 29.75 \\
                          & Intermediate & 26.63 & 26.08 & 24.46 \\
                          & High   & 19.78 & 18.64 & 17.06 \\
        \midrule
        \multirow{3}{*}{Deep Q-Network} 
                          & Low    & 31.15 & 29.84 & 28.18 \\
                          & Intermediate & 25.11 & 24.05 & 22.47 \\
                          & High   & 17.82 & 16.27 & 14.92 \\
        \midrule
        \multirow{3}{*}{Q-learning} 
                          & Low    & 27.44 & 26.12 & 24.30 \\
                          & Intermediate & 22.57 & 21.09 & 19.43 \\
                          & High   & 15.54 & 14.18 & 11.98 \\
        \bottomrule
    \end{tabular}
\end{table}

\subsubsection{Sensitivity Analyses.}
To assess the robustness of the observed health outcomes, we conduct a comprehensive sensitivity analysis across the algorithm's key hyperparameters. 

\paragraph{Return maximization and constraint satisfaction trade-off.} A critical component of the \gls{BDRL} framework is the Lagrangian multiplier $\lambda$, which governs the trade-off between return maximization and constraint satisfaction. We analyze the sensitivity of patient \glspl{qaly} to variations in $\lambda$, ranging from slightly constrained to highly regularized regimes. As shown in Figure~\ref{fig:three_panels_vertical}, a moderate $\lambda$ effectively enhances the \glspl{qaly} of median and vulnerable patients, whereas excessive values over-constrain the optimization, causing the penalty term to dominate the return signal.

\begin{figure}[htbp]
    \centering
    
    \begin{subfigure}{\textwidth}
        \centering
        \includegraphics[width=\linewidth]{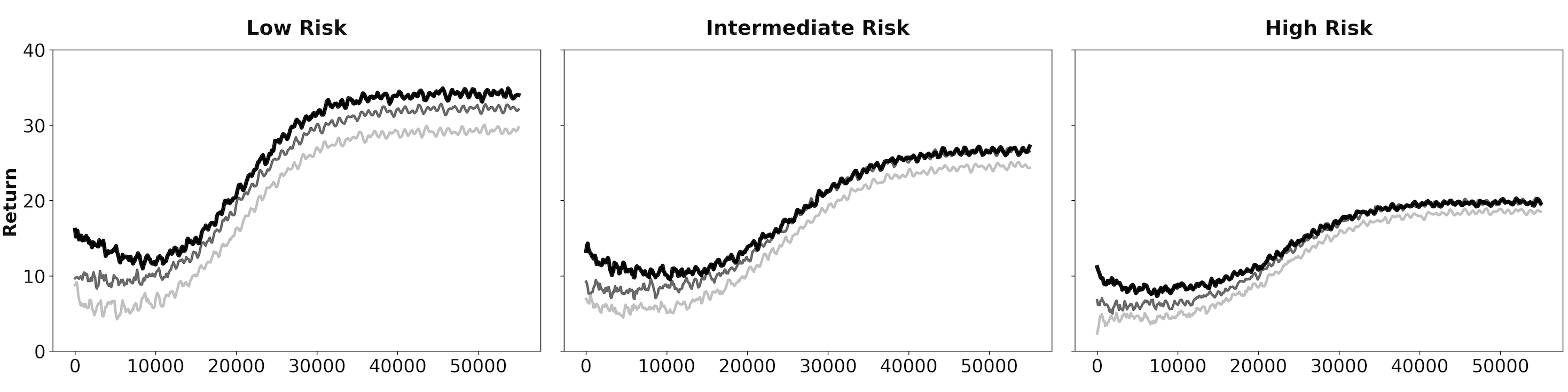}
        \caption{$\lambda = 0.01$}
        \label{fig:panel_a}
    \end{subfigure}
    
    
    \begin{subfigure}{\textwidth}
        \centering
        \includegraphics[width=\linewidth]{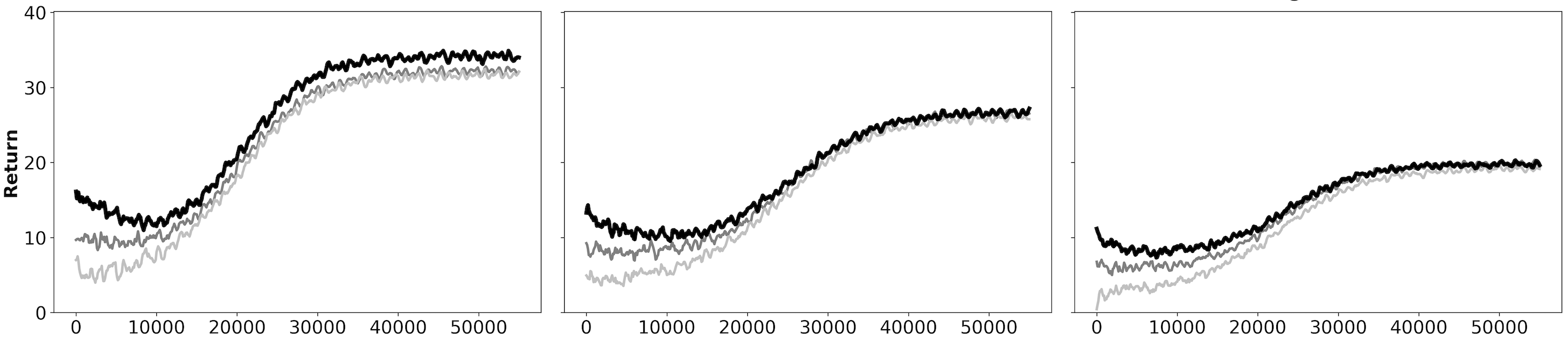}
        \caption{$\lambda = 0.1$}
        \label{fig:panel_b}
    \end{subfigure}
    
    
    \begin{subfigure}{\textwidth}
        \centering
        \includegraphics[width=\linewidth]{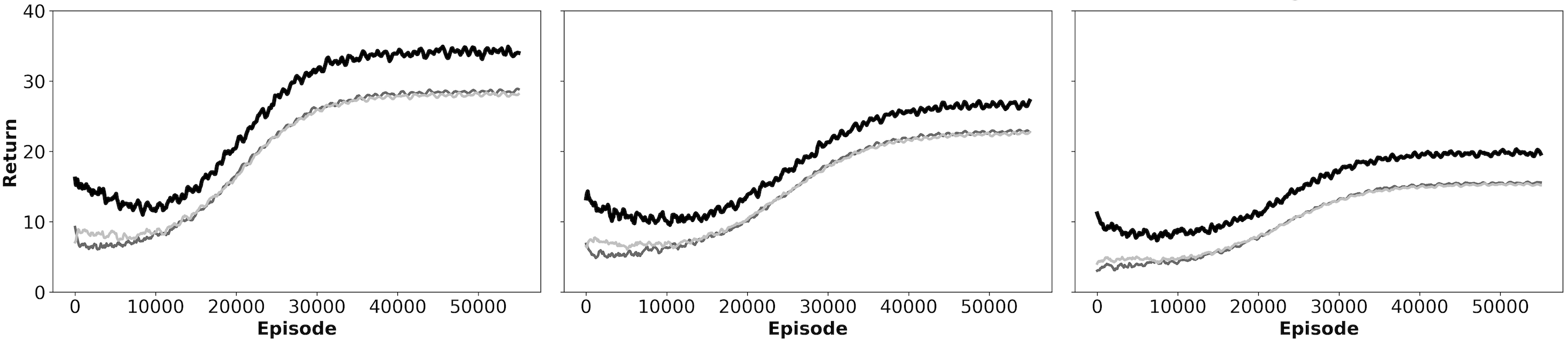}
        \caption{$\lambda = 0.5$}
        \label{fig:panel_c}
    \end{subfigure}
    
    \caption{\textbf{\boldmath Comparison of QALYs across different penalty weights.}}
    \label{fig:three_panels_vertical}
\end{figure}

\paragraph{Return difference tolerance, mixing parameter, and fallback weight.} Beyond $\lambda$, we further evaluate the stability of the solution under varying model parameters, including the projection tolerance $\epsilon$, the fallback weight $\rho$, and the mixing parameter $\alpha$. Table~\ref{tab:all_sensitivity_results} presents our results. Overall, the \gls{BDRL} framework remains robust across various hyperparameter settings, though appropriate tuning can further enhance algorithm performance.

\begin{table}[h!]
\centering
\caption{\textbf{Mean discounted QALYs.} The base case uses $W_2$-regularization with $\lambda=0.1$, $\epsilon=0.010$, $\rho=0.90$, $\alpha=0.05$. Sensitivity rows vary one parameter at a time. The \gls{KL} and \gls{JS} rows show results across regularization strengths $\lambda\in\{0,0.01,0.10,0.50\}$.}
\label{tab:all_sensitivity_results}
\begin{tabular}{ll ccc ccc ccc}
\toprule
& & \multicolumn{3}{c}{\textbf{Low risk}}
  & \multicolumn{3}{c}{\textbf{Intermediate}}
  & \multicolumn{3}{c}{\textbf{High risk}} \\
\cmidrule(lr){3-5}\cmidrule(lr){6-8}\cmidrule(lr){9-11}
\multicolumn{2}{c}{\textbf{Scenario}}
  & Resi. & Med. & Vulne.
  & Resi. & Med. & Vulne.
  & Resi. & Med. & Vulne. \\
\midrule
\multicolumn{2}{l}{\textit{Base case}}
              & 32.35 & 32.19 & 31.82 & 26.63 & 26.35 & 26.26 & 19.78 & 19.63 & 19.37 \\
\addlinespace
\multirow{2}{*}{$\epsilon$}
  & 0.005 & 32.08 & 31.52 & 31.03 & 26.46 & 26.12 & 25.91 & 19.25 & 18.78 & 18.57 \\
  & 0.050 & 32.19 & 31.68 & 31.25 & 26.41 & 26.03 & 25.76 & 19.34 & 19.08 & 18.87 \\
\addlinespace
\multirow{2}{*}{$\rho$}
  & 0.80  & 31.95 & 31.43 & 31.11 & 26.37 & 26.09 & 25.79 & 19.19 & 18.86 & 18.64 \\
  & 0.95  & 31.86 & 31.54 & 31.13 & 25.83 & 25.38 & 25.05 & 18.84 & 18.39 & 18.12 \\
\addlinespace
\multirow{2}{*}{$\alpha$}
  & 0.02  & 31.67 & 31.24 & 30.85 & 25.82 & 25.49 & 25.15 & 19.13 & 18.76 & 18.42 \\
  & 0.10  & 31.64 & 31.21 & 30.77 & 26.07 & 25.61 & 25.36 & 19.23 & 18.97 & 18.45 \\
\addlinespace
\multirow{4}{*}{KL}
  & 0     & 28.69 & 26.97 & 26.37 & 24.61 & 22.75 & 21.79 & 17.96 & 16.88 & 16.13 \\
  & 0.01  & 28.47 & 27.06 & 26.46 & 24.35 & 22.87 & 21.94 & 17.84 & 16.97 & 16.26 \\
  & 0.10  & 27.95 & 27.01 & 26.49 & 23.08 & 22.45 & 21.87 & 17.03 & 16.68 & 16.34 \\
  & 0.50  & 27.34 & 27.21 & 27.08 & 22.67 & 22.41 & 22.15 & 13.62 & 13.53 & 13.37 \\
\addlinespace
\multirow{4}{*}{JS}
  & 0     & 29.04 & 27.05 & 26.02 & 24.95 & 23.14 & 22.09 & 18.06 & 17.21 & 16.62 \\
  & 0.01  & 28.82 & 27.13 & 26.20 & 24.70 & 23.21 & 22.23 & 17.84 & 17.30 & 16.67 \\
  & 0.10  & 28.34 & 27.46 & 26.90 & 23.50 & 22.54 & 22.34 & 17.50 & 17.04 & 16.78 \\
  & 0.50  & 27.82 & 27.52 & 27.38 & 23.11 & 22.60 & 22.43 & 13.71 & 13.67 & 13.47 \\
\bottomrule
\end{tabular}
\end{table}

\paragraph{Alternative Distributional Constraints.} We also explore the \gls{KL} and \gls{JS} divergences to regularize the discrepancy between the return distributions. However, these metrics proved detrimental to the optimization process. For example, suppose two patients in the same risk group have predicted return distributions with disjoint supports: one distribution places most of its probability mass on outcomes near 10 \glspl{qaly}, while the other concentrates near 18 \glspl{qaly} due to initial random initialization. In this case, the \gls{KL} divergence becomes infinite (undefined) because the distributions have disjoint supports. By contrast, the \gls{JS} divergence saturates to a constant value, providing zero gradient signal to guide the distributions closer together. These pathologies yield unstable or uninformative gradients, which in turn require aggressive gradient clipping to prevent divergence. However, heavy clipping suppresses the update signal precisely where improvements are needed, limiting gains for the highest-risk patients. Consistent with this effect, Table~\ref{tab:all_sensitivity_results} shows substantially poorer clinical outcomes under divergence-based regularization: for the most vulnerable patient in each group, QALYs fall to as low as 13.37, markedly below those achieved by \gls{BDRL}.

\section{Discussion}\label{sec:Discussion}
This work introduced \gls{BDRL}, a new algorithm that improves the return distributions of low-performing agents by aligning their behavior with high-performing references. Our primary contributions included the formulation of a $W_2$-regularized loss function that ensures similar distributions among comparable agents and stable gradient signaling across disjoint supports, and the derivation of a convex quadratic post-update projection that guaranties outcome consistency without destabilizing the learning process. We showed the algorithmic contraction properties, stability, and efficiency under mild assumptions. 

Our empirical results demonstrated the practical value of \gls{BDRL}. We observed that our algorithm generates comparable policies for vulnerable and resilient agents, improving outcomes for vulnerable patients and those with median outcomes without compromising outcomes for resilient references. This merit resulted in policies that outperformed standard \gls{DRL}, Q-learning, and Deep Q-Network in terms of aggregate \glspl{qaly}.

Our results highlighted a consistent trade-off governed by the penalty weight $\lambda$. Moderate values of $\lambda$ can improve outcomes for patients experiencing subpar results while simultaneously reducing within-group dispersion, whereas excessively large $\lambda$ can over-constrain learning and degrade \glspl{qaly}. This pattern aligns with the intended role of $\lambda$: it mediates the balance between outcome optimality and constraint satisfaction. Extensive sensitivity analyses, varying the projection tolerance $\epsilon$, mixing parameter $\alpha$, and fallback weight $\rho$, demonstrated that the algorithm remained robust with respect to hyperparameter selection. 

A notable methodological insight is the importance of the choice of distributional discrepancy measure. Although \gls{KL} and \gls{JS} divergences are widely used in probabilistic learning, our experiments indicated that they were poorly suited for regularizing return distributions in our setting. The \gls{KL} divergence can become undefined or infinite when the supports are disjoint, while \gls{JS} divergence may saturate and yield vanishing gradients in similar regimes. In contrast, the $W_2$ distance leveraged the geometry of the outcome space and yields informative gradients even when two distributions have limited overlap, resulting in stable convergence and stronger performance. This finding suggested that transport-based regularization may be a more reliable metric for constraint-aware distributional \gls{RL} when distributional probability mass varied substantially across agents.

We concluded by discussing potential opportunities for future research from methodological and applied perspectives. From an algorithmic standpoint, our operationalization of comparability relies on discrete partitioning (e.g., grouping patients into $k=3$ clusters). While computationally efficient, this approach introduces dependency on the specific clustering method and granularity chosen. A natural extension is to replace discrete groups with continuous similarity weighting \citep{lahoti2019operationalizing}, enabling constraints that vary smoothly with patient similarity rather than relying on hard assignments. Furthermore, our constraints are currently enforced on learned return distributions. Although our theoretical analysis provides a bound connecting these learned constraints to ground-truth outcomes, finite-sample effects and approximation errors remain relevant practical challenges, particularly when trajectory counts are limited. Future methodological work could address these difficulties by developing adaptive strategies for tuning the penalty weight $\lambda$ during training, such as dual-ascent updates \citep{ding2020natural} or specific constraint violation targets, to reduce hyperparameter sensitivity and improve the robustness of the learned policy with limited samples. 

From a clinical perspective, our evaluation relies on a simulation-based approximation of patient dynamics. Consequently, the results inherit the assumptions embedded in the underlying risk models, treatment-effect estimates, and progression equations. Any misspecification in the simulator can propagate to the learned policies. Bridging simulation-based learning with observational data represents a critical next step to mitigate this risk. Future efforts should focus on integrating distribution shift diagnostics with online data to enhance the reliability of deploying \gls{RL} policies in real-world clinical decision support. Additionally, our current definition of comparability is based solely on baseline health conditions. Different patient partitions that incorporate additional comorbidities or social determinants of health could alter the constraint structure. To address this shortcoming, future work may extend the framework to incorporate explicit constraint functions beyond distributional similarity, such as safety thresholds, treatment burden limits, and monotonicity constraints with respect to specific risk factors.

Overall, our results showed that \gls{BDRL} provides a principled and effective mechanism for learning clinically beneficial policies while promoting comparable outcomes among similar agents. Beyond empirical gains, our methodological findings underscore the need to move beyond expectation-based objectives in high-stakes settings. We demonstrated that modeling the full return distribution is essential for capturing the heterogeneous risk profiles. Moreover, we showed that regularization via the 2-Wasserstein distance provides the numerical stability required to enforce constraints even when outcome probability masses are disjoint. By integrating distributional learning with geometry-aware constraints, our algorithm advances the usability of sequential decision-making techniques in healthcare settings where outcome quality and consistency are central.


%
%
%
%
%

%



\bibliographystyle{plainnat} 
\bibliography{sample}

\newpage
\setcounter{section}{0}
\appendix

\setcounter{figure}{0}
\makeatletter
\renewcommand{\fnum@figure}{Supplementary Figure~\thefigure}
\makeatother

\section{Theoretical Results}
\setcounter{theorem}{0}
\renewcommand{\thetheorem}{\arabic{theorem}}
\setcounter{proposition}{0}
\setcounter{property}{0}

\label{app:theory_results}
\begin{property}[Linear Mixture Contraction]
\label{prop:LinearMixtureContraction}
Let $Z$ be any distribution in $\mathcal{P}_2(\mathbb{R}^d)$ and let $Z^*$ be a target distribution. For any $\rho \in (0,1)$, define the mixture update:
\[
\widetilde{Z} = \rho Z + (1-\rho) Z^*.
\]
Then the update is a strict contraction toward $Z^*$ in the 2-Wasserstein metric:
\[
W_2(\widetilde{Z}, Z^*) \leq \sqrt{\rho} \, W_2(Z, Z^*).
\]
Consequently, repeated application of this update ensures that $W_2(\widetilde{Z}^{(n)}, Z^*) \to 0$ as $n \to \infty$.
\end{property}

\begin{proof}

\textit{Proof.} Let $\pi^* \in \Pi(Z, Z^*)$ be an optimal coupling between $Z$ and $Z^*$ such that:
\[
W_2^2(Z, Z^*) = \int_{\mathbb{R}^d \times \mathbb{R}^d} \|x - y\|^2 \, d\pi^*(x, y).
\]
We construct a candidate coupling $\tilde{\pi}$ between the mixture $\widetilde{Z}$ and the target $Z^*$ by taking a convex combination of the optimal coupling $\pi^*$ and the identity coupling $\text{Id}_{\# Z^*}$ (the coupling that maps $Z^*$ to itself):
\[
\tilde{\pi} = \rho \pi^* + (1-\rho) \text{Id}_{\# Z^*}.
\]

\noindent\textbf{1. Verification of Marginals:}
The first marginal of $\tilde{\pi}$ is:
\begin{align*}
\operatorname{Proj}_{1}(\tilde{\pi})
&= \rho\, \operatorname{Proj}_{1}(\pi^*)
   + (1-\rho)\, \operatorname{Proj}_{1}(\operatorname{Id}_{\# Z^*}) \\
&= \rho Z + (1-\rho) Z^* \\
&= \widetilde{Z}.
\end{align*}
The second marginal of $\tilde{\pi}$ is:
\begin{align*}
\operatorname{Proj}_{2}(\tilde{\pi})
&= \rho\, \operatorname{Proj}_{2}(\pi^*)
   + (1-\rho)\, \operatorname{Proj}_{2}(\operatorname{Id}_{\# Z^*}) \\
&= \rho Z^* + (1-\rho) Z^* \\
&= Z^*.
\end{align*}
Thus, $\tilde{\pi} \in \Pi(\widetilde{Z}, Z^*)$ is a valid coupling.

\noindent\textbf{2. Bounding the Distance:}
By the definition of the 2-Wasserstein distance as the infimum over all valid couplings, we have:
\begin{align*}
W_2^2(\widetilde{Z}, Z^*)
&\leq \int_{\mathbb{R}^d \times \mathbb{R}^d}
   \|x - y\|^2 \, d\tilde{\pi}(x, y) \\
&= \rho \int_{\mathbb{R}^d \times \mathbb{R}^d}
   \|x - y\|^2 \, d\pi^*(x, y) \\
&\quad + (1-\rho)
   \int_{\mathbb{R}^d \times \mathbb{R}^d}
   \|x - y\|^2 \, d\operatorname{Id}_{\# Z^*}(x, y).
\end{align*}
In the identity coupling $\text{Id}_{\# Z^*}$, the mass is concentrated on the diagonal where $x=y$, implying $\|x - y\|^2 = 0$. Therefore, the second integral vanishes, leaving:
\[
W_2^2(\widetilde{Z}, Z^*) \leq \rho W_2^2(Z, Z^*).
\]
Taking the square root of both sides yields $W_2(\widetilde{Z}, Z^*) \leq \sqrt{\rho} W_2(Z, Z^*)$. Since $\sqrt{\rho} < 1$, the distance strictly decreases at each step. Convergence to zero follows from the geometric decay $(\sqrt{\rho})^n W_2(Z^{(0)}, Z^*)$.
\end{proof}

\begin{theorem}[The Contraction of Mixtures]
\label{thm:Contraction of Mix}
The distance between the updated distributional estimate $Z_i^{(t)}$ and the optimal group estimate $Z_{g_k}^{*}$ is nonexpansive under the mixture operation. Specifically, let $d^{(t)}= W_2(Z_i^{(t)}, Z_{g_k}^{*})$. We claim that the distance at the subsequent step satisfies: 
\begin{equation*}
    d^{(t+1)} \leq \max \{d^{(t)}, \epsilon \},
\end{equation*}
where $\epsilon>0$ is a predefined error bound.
\end{theorem}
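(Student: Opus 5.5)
The proof is a case analysis over the three branches of Algorithm~\ref{alg:cdrl-part-b}, carried out separately for each fixed state--action pair $(s,a)\in\mathcal{B}$ and each agent $i\in g_k$. Throughout, the pre-update estimate $Z_i$ that appears in the mixtures is identified with $Z_i^{(t)}$, and the output of \textsc{PostUpdate} is identified with $Z_i^{(t+1)}$. I read all mixtures $\alpha Z_i+(1-\alpha)Z_i'$ and $\rho Z_i+(1-\rho)Z^*_{g_k}$ as mixtures of probability measures; this is the reading under which Property~\ref{prop:LinearMixtureContraction} applies. Pairs $(s,a)\notin\mathcal{B}$ are not modified in that round, so for them $d^{(t+1)}=d^{(t)}$ and the claim holds trivially.

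\emph{Case 1 (acceptance).} If $\Wtwo(Z^*_{g_k},Z_i')<\epsilon$, then $Z_i^{(t+1)}=Z_i'$. Hence $d^{(t+1)}<\epsilon\le\max\{d^{(t)},\epsilon\}$.

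\emph{Case 2 (convex projection).} If Problem~\eqref{eq:convex_opt} is feasible and the returned $\alpha\in(0,1)$ is feasible, then $Z_i^{(t+1)}=\alpha Z_i^{(t)}+(1-\alpha)Z_i'$. The constraint of \eqref{eq:convex_opt} directly gives $d^{(t+1)}=\Wtwo(Z_i^{(t+1)},Z^*_{g_k})\le\epsilon$. The objective plays no role in this case; only feasibility of the returned $\alpha$ is needed.

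\emph{Case 3 (fallback).} Here $Z_i^{(t+1)}=\rho Z_i^{(t)}+(1-\rho)Z^*_{g_k}$. Property~\ref{prop:LinearMixtureContraction}, applied with $Z=Z_i^{(t)}$ and $Z^*=Z^*_{g_k}$, gives
\[
d^{(t+1)}\le\sqrt{\rho}\,d^{(t)}\le d^{(t)}\le\max\{d^{(t)},\epsilon\}.
\]

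Combining the three cases yields $d^{(t+1)}\le\max\{d^{(t)},\epsilon\}$. For the nonexpansiveness phrasing of the statement, I would also record the underlying inequality behind Property~\ref{prop:LinearMixtureContraction}. Joint convexity of $W_2^2$ under mixtures is proved by gluing an optimal coupling of $(Z_i^{(t)},Z^*)$ with one of $(Z_i',Z^*)$, exactly as in Property~\ref{prop:LinearMixtureContraction}. This gives
\[
\Wtwo^2(\alpha Z_i^{(t)}+(1-\alpha)Z_i',Z^*)\le\alpha\,\Wtwo^2(Z_i^{(t)},Z^*)+(1-\alpha)\,\Wtwo^2(Z_i',Z^*).
\]
So every mixture lies no farther from $Z^*_{g_k}$ than the worse of its two endpoints.

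The main obstacle is bookkeeping rather than analysis. First, one must fix precisely which object is $Z_i^{(t)}$ and confirm that the fallback mixes with $Z_i^{(t)}$, not with $Z_i'$. Second, the feasible set of \eqref{eq:convex_opt} is over the open interval $(0,1)$, so a minimizer may not be attained. I would handle this by observing that only feasibility of the returned $\alpha$ is used, not optimality. Equivalently, by Theorem~\ref{thm:Convex Opti} the constraint is a convex quadratic inequality in $\alpha$, so its feasible set is a closed interval intersected with $(0,1)$, and any point of it works. If the feasible set is empty, the algorithm falls into Case~3 by construction.
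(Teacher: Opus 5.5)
Your proposal is correct and follows the paper's proof essentially step for step. It uses the same three-way case split: acceptance; the feasible convex projection, where only the constraint of \eqref{eq:convex_opt} is used; and the fallback, handled by Property~\ref{prop:LinearMixtureContraction} to get $d^{(t+1)}\le\sqrt{\rho}\,d^{(t)}\le d^{(t)}$. Your extra remarks are sound but not needed for the bound: the convexity inequality is the paper's Property~\ref{prop:W2sq-convex}, obtained by mixing two couplings rather than gluing them, and the open-interval point shows only feasibility of $\alpha$ matters.
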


\begin{proof}
\noindent\textit{Proof.} Let $d^{(t)} := W_2(Z^{(t)}, Z^*)$ denote the distance to the target distribution at iteration $t$. At each iteration, the algorithm produces a candidate update $Z_i'$ and then determines the next iterate $Z^{(t+1)}$ according to one of the following mutually exclusive situations: (i) the candidate $Z_i'$ already lies within the $\epsilon$-neighborhood of $Z^*$;
(ii) although $Z_i'$ is not within the $\epsilon$-ball, there exists a convex combination of $Z^{(t)}$ and $Z_i'$ that enters the $\epsilon$-ball, in which case we select such a mixture as $Z^{(t+1)}$; or (iii) no convex combination of $Z^{(t)}$ and $Z_i'$ can satisfy the $\epsilon$ constraint, so we apply the fallback contraction update $Z^{(t+1)}=\rho Z^{(t)}+(1-\rho)Z^*$. We now verify in each case that $d^{(t+1)} \le \max\{d^{(t)},\epsilon\}$.

Case 1 ($Z_i^{'}$ is already close): If $W_2(Z_i^{'},Z_{g_k}^{*}) \leq \epsilon$, then we have: 
\begin{equation*}
    d^{(t+1)} = W_2(Z_i^{t+1},Z_{g_k}^{*}) 
= W_2(Z_i^{'},Z_{g_k}^{*}) 
\leq \epsilon \leq \max\{d^{(t)}, \epsilon \}.
\end{equation*}

Case 2 (there is a convex combination within $\epsilon$): We obtain $\alpha^* \in (0,1)$ by solving Problem 1 in Algorithm 2, so that: $W_2(\alpha^* Z_i^{t} + (1-\alpha^*)Z_i^{'}, Z_{g_k}^{*}) \leq \epsilon$.
The next estimate $Z^{(t+1)}$ is set to $\alpha^* Z_i + (1-\alpha^*)Z_i'$, so $d^{(t+1)} \le \epsilon \le \max\{d^{(t)}, \epsilon\}$.

Case 3 (fallback contraction toward $Z^*$): No convex combination with $Z'$ stays within the $\epsilon$-ball, so we apply
\[
Z^{(t+1)}
~=~ \rho\,Z^{(t)} \;+\;(1-\rho)\,Z^*,
\]
with $0<\rho<1$. By Property~\ref{prop:LinearMixtureContraction}, we have
\begin{align*}
d^{(t+1)}
&= W_2\bigl(\rho\,Z^{(t)} + (1-\rho)\,Z^*,\, Z^*\bigr) \\
&\le \sqrt{\rho}\, W_2\bigl(Z^{(t)}, Z^*\bigr) \\
&< W_2\bigl(Z^{(t)}, Z^*\bigr) \\
&= d^{(t)}.
\end{align*}
If $d^{(t)}>\epsilon$, we have
\[
d^{(t+1)} \;\le\; d^{(t)} \;=\;\max\{d^{(t)},\,\epsilon\},
\]
If $d^{(t)}<=\epsilon$, we have
\[
d^{(t+1)} \;\le\; d^{(t)} \;\leq\;\max\{d^{(t)},\,\epsilon\} = \epsilon\,
\]
completing the proof that $d^{(t+1)}\le\max\{d^{(t)},\epsilon\}$ in this branch.
\end{proof}

\begin{proposition}[$W_2$ Distance Convergence]
\label{prop:Nonexpansive}
The update rule guarantees that the $W_2$ distance between the current estimate and the optimal estimate remains nonexpansive, leveraging the property established in Theorem~\ref{thm:Contraction of Mix}. Consequently, the eventual $W_2$ distance will stay within the $\epsilon$ range.
\end{proposition}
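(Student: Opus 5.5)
The plan is to iterate the one-step bound of Theorem~\ref{thm:Contraction of Mix} along the sequence of post-update iterates $Z_i^{(t)}$ and combine it with the strict contraction of Property~\ref{prop:LinearMixtureContraction} in the fallback branch. Set $M_t \coloneqq \max\{d^{(t)},\epsilon\}$. First I will show that $M_t$ is nonincreasing, which is what ``nonexpansive'' means here. Theorem~\ref{thm:Contraction of Mix} gives $d^{(t+1)} \le M_t$, hence
\[
M_{t+1} = \max\{d^{(t+1)},\epsilon\} \le \max\{M_t,\epsilon\} = M_t .
\]
By induction $d^{(t)} \le \max\{d^{(0)},\epsilon\}$ for all $t$. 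So the alignment error never exceeds its initial value by more than the tolerance, and once $d^{(t_0)}\le\epsilon$ it stays there: $d^{(t)}\le\epsilon$ for all $t\ge t_0$. This is the invariance of the $\epsilon$-ball.

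Second, I will show that the $\epsilon$-ball is actually reached, or at least approached. I will split on which branch of Algorithm~\ref{alg:cdrl-part-b} fires at each step while $d^{(t)}>\epsilon$.
\begin{itemize}
\item In branches (i) and (ii), the proof of Theorem~\ref{thm:Contraction of Mix} already yields $d^{(t+1)}\le\epsilon$, so the ball is entered immediately.
\item In branch (iii), Property~\ref{prop:LinearMixtureContraction} gives $d^{(t+1)}\le\sqrt{\rho}\,d^{(t)}$.
\end{itemize}
Hence, as long as the iterate remains outside the ball, every step is a fallback step, and after $n$ such steps $d^{(n)}\le \rho^{n/2} d^{(0)}$. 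Choosing $n \ge \lceil 2\ln(\epsilon/d^{(0)})/\ln\rho\rceil$ forces $d^{(n)}\le\epsilon$, which matches the finite-step count announced after Theorem~\ref{thm:Contraction of Mix}. Combined with the invariance from the first step, this gives $\limsup_t d^{(t)}\le\epsilon$, i.e.\ the eventual $W_2$ distance stays within $\epsilon$.

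The main obstacle is making precise what ``the update rule'' and $Z^{(t)}$ refer to. Algorithm~\ref{alg:cdrl-part-b} is applied per $(s,a)\in\mathcal{B}$, and the network's gradient step can in principle move $Z_i(s,a)$ at pairs that are not currently being projected. I would therefore state the result per fixed $(s,a)$ and index $t$ by the successive visits at which \textsc{PostUpdate} is applied to that pair. I would also assume, as the paper's description of $Z_i$ as ``carried over from the last time we saw the same state-action pair'' suggests, that the stored estimate is unchanged between visits. Under that assumption the two steps above apply verbatim. Without it, I would fall back to claiming only the conclusion of the first step, invariance of the $\epsilon$-ball, at projected steps.
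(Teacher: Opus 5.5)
Your proof is correct under the assumptions you flag: the reference $Z^*_{g_k}$ is fixed, $t$ indexes successive projections of a fixed $(s,a)$, and the stored $Z_i$ equals the previous iterate $Z_i^{(t)}$. The paper makes the same identification without comment when it writes the fallback as $\rho Z^{(t)}+(1-\rho)Z^*$. Your argument has the same skeleton as the paper's: geometric decay $d^{(t+1)}\le\sqrt{\rho}\,d^{(t)}$ from Property~\ref{prop:LinearMixtureContraction} while the iterate is outside the ball, then the count $N=\lceil \ln(\epsilon^2/(d^{(0)})^2)/\ln\rho\rceil$.

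Where you differ is in how the one-step decay is obtained. The paper starts from ``whenever $d^{(t)}>\epsilon$, Case 3 yields.'' It writes $d^{(t+1)}=\min_{\alpha'\in[0,1]}W_2(\tilde Z(\alpha'),Z^*_{g_k})$ for $\tilde Z(\alpha')=\alpha' Z_i'+(1-\alpha')Z_i^{(t)}$ and bounds this by the endpoint $W_2(Z_i',Z^*_{g_k})$. It then asserts that this endpoint distance is at most $\sqrt{\rho}\,d^{(t)}$ by Property~\ref{prop:LinearMixtureContraction}. You instead split on the branch of Algorithm~\ref{alg:cdrl-part-b} and note that branches (i) and (ii) put the iterate in the ball immediately. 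You then apply Property~\ref{prop:LinearMixtureContraction} only to the fallback iterate $\rho Z_i^{(t)}+(1-\rho)Z^*_{g_k}$, which is the one object that property actually controls.

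This gives you a sound one-step inequality, whereas the paper's chain is not justified as written, for two reasons:
\begin{itemize}
\item Property~\ref{prop:LinearMixtureContraction} says nothing about the raw network output $Z_i'$. In the fallback case $Z_i'$ is at distance at least $\epsilon$ from $Z^*_{g_k}$, and possibly farther than $d^{(t)}$.
\item The min-over-mixtures identity does not match Problem~\eqref{eq:convex_opt}, whose objective is the distance to $Z_i'$ rather than to $Z^*_{g_k}$.
\end{itemize}

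Your monotone quantity $M_t=\max\{d^{(t)},\epsilon\}$ also gives an explicit proof that the $\epsilon$-ball is absorbing. The claim ``will stay within the $\epsilon$ range'' needs this, and the paper leaves it implicit. Combined with the hitting-time bound, it gives $d^{(t)}\le\epsilon$ for all $t\ge N$, which is slightly stronger than the $\limsup$ statement you end with.
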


\begin{proof}
\noindent\textit{Proof.} Whenever $d^{(t)} > \epsilon$, Case 3 above yields: \\
$d^{(t)} := W_2(Z_i^{(t)}, Z_{g_k}^*)$ and define the mixture family
\[
\tilde Z(\alpha') := \alpha' Z_i' + (1-\alpha') Z_i^{(t)}, \qquad \alpha'\in[0,1].
\]
By definition of $d^{(t+1)}$ as the best mixture (projection) onto the feasible set,
\begin{equation}
\label{eq:dtplus1-min}
d^{(t+1)} \;=\; \min_{\alpha'\in[0,1]} W_2\!\bigl(\tilde Z(\alpha'),\, Z_{g_k}^*\bigr).
\end{equation}
In particular, since $\alpha'=1$ is a feasible choice, based on Property~\ref{prop:LinearMixtureContraction}, we have:
\begin{equation}
\label{eq:min-leq-endpoint}
d^{(t+1)}
\;\le\;
W_2\!\bigl(\tilde Z(1), Z_{g_k}^*\bigr)
=
W_2(Z_i', Z_{g_k}^*)
\;\le\; \sqrt{\rho}d^{(t)}
\end{equation}

Let $N = \min\{n: \rho^{n/2} d^{(0)} \leq \epsilon\} = \left\lceil \ln(\epsilon^2/d^{(0)^2})/\ln\rho \right\rceil$. This indicates after $N$ steps, we are guaranteed to enter the $\epsilon$ range. Note that $\rho$ can be arbitrarily small.
\end{proof}



\begin{theorem}[Convex Quadratic Transformation]
\label{thm:Convex Opti}
The conditional optimization problem defined by:
\begin{align*}
& \min_{\alpha} \ W_2(\alpha Z_i(s,a) + (1-\alpha)Z_i'(s,a),Z_i'(s,a)) \\
& \text{s.t.} \ \ W_2(\alpha Z_i(s,a) + (1-\alpha)Z_i'(s,a),Z_i^{*}(s,a)) \leq \epsilon \\
& \ \ \ \ \ \ \alpha \in (0,1)
\end{align*}
can be reformulated as a convex quadratic optimization problem. This transformation allows for efficient numerical solutions using standard quadratic programming solvers, substantially improving the computational efficiency of the training loop.
\end{theorem}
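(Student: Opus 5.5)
The plan is to work throughout with the fixed-support categorical representation used by the algorithm. All of $Z_i(s,a)$, $Z_i'(s,a)$ and $Z_i^*(s,a)$ live on the common atoms $\{z_d\}_{d=1}^D$ with spacing $\Delta z$. On this support $W_2$ is computed as
\[
\Wtwo(Z_1,Z_2)=\Bigl(\sum_{d=1}^D [F_1(z_d)-F_2(z_d)]^2\,\Delta z\Bigr)^{1/2},
\]
as in the constraint-violation term of Algorithm~\ref{alg:cdrl-part-a}. I would state explicitly that this discretized CDF form is the metric being optimized.

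The key observation is that a mixture of distributions has a mixture CDF. If $Z_\alpha:=\alpha Z_i+(1-\alpha)Z_i'$, then $F_\alpha=\alpha F_i+(1-\alpha)F_i'$, which is affine in $\alpha$ at every atom. This holds for any supports, disjoint or degenerate, because the CDFs are always defined on the common grid.

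Next I substitute into the two squared distances.

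For the objective, $F_\alpha-F_i'=\alpha(F_i-F_i')$. Hence
\[
\Wtwo^2(Z_\alpha,Z_i')=\alpha^2\sum_d (F_i-F_i')^2\Delta z=A_0\alpha^2,\qquad A_0\ge 0.
\]
Since $\Wtwo\ge0$ and the square root is monotone, minimizing $\Wtwo$ is equivalent to minimizing $A_0\alpha^2$.

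For the constraint, write $u_d=F_i(z_d)-F_i'(z_d)$ and $v_d=F_i'(z_d)-F_i^*(z_d)$. Then $F_\alpha-F^*=\alpha u+v$, and squaring $\Wtwo\le\epsilon$ (both sides nonnegative) gives
\[
A\alpha^2+B\alpha+C\le\epsilon^2,
\]
where $A=\sum_d u_d^2\Delta z$, $B=2\sum_d u_dv_d\Delta z$ and $C=\sum_d v_d^2\Delta z$. Here $A\ge0$, so the constraint function is a convex quadratic and its sublevel set is an interval. Together with $\alpha\in(0,1)$, the result is a one-dimensional convex QP with a convex quadratic objective $A_0\alpha^2$ (note $A_0=A$).

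The main obstacle is the edge cases. The first is $A=0$, i.e.\ $Z_i=Z_i'$ on the grid. Then the problem is degenerate but still convex, in fact linear/constant: it is feasible iff $C\le\epsilon^2$, and any $\alpha$ is optimal. So I would say "convex quadratic (possibly degenerate)" rather than insist on $A>0$.

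The second edge case is the open interval $(0,1)$. I would replace it by the closure $[0,1]$, or note that the infimum is attained at the smallest feasible $\alpha$. That point is the left root $\alpha_-=\bigl(-B-\sqrt{B^2-4A(C-\epsilon^2)}\bigr)/(2A)$, clipped to the interval, which gives a closed-form solution and confirms tractability.

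Finally, I would check differentiability in $\alpha$ (immediate, since both functions are polynomials). I would also note that the constraint can be written as a second-order cone $\|\alpha\sqrt{\Delta z}\,u+\sqrt{\Delta z}\,v\|_2\le\epsilon$ if a solver-facing form is desired.
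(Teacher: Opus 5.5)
Your proposal is correct and follows essentially the same route as the paper: on the shared grid the mixture CDF is affine in $\alpha$, so the squared CDF-difference sums become an objective $A\alpha^2$ and a constraint $A\alpha^2+2B\alpha+C\le\epsilon^2$. The appendix weights these sums by $(\Delta z)^2$ rather than the algorithm's $\Delta z$, which is immaterial.

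Your extra care is worthwhile. The paper simply asserts $A>0$ and never discusses attainment on the open interval $(0,1)$. Your stipulation that the CDF-difference formula \emph{is} the metric also matters: that formula is really the Cram\'er ($\ell_2$) distance, and with the true quantile-based $W_2$ the problem would still be convex in $\alpha$ but piecewise linear rather than quadratic.
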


\begin{proof}
\textit{Proof.} We consider discrete approximations of the return distributions supported on a common finite grid, which is also the condition of our \gls{BDRL} algorithm.
Here:
\begin{itemize}
    \item $D$ denotes the number of support points (quantile atoms) in the discretization;
    \item $\{z_d\}_{d=1}^D$ are the fixed support locations shared by all three distributions;
    \item $\delta_{z_d}$ denotes the Dirac delta (point mass) at location $z_d$;
    \item $p_d$, $p_d'$, and $q_d$ are the probability masses assigned to $z_d$ by $Z_i(s,a)$, $Z_i'(s,a)$, and $Z_i^*(s,a)$ respectively, satisfying
    \[
    p_d,\, p_d',\, q_d \ge 0, 
    \qquad 
    \sum_{d=1}^D p_d = \sum_{d=1}^D p_d' = \sum_{d=1}^D q_d = 1.
    \]

\end{itemize}
Let the distributions be represented by point masses $Z_i(s,a) = \sum\limits_{d=1}^{D} p_d \delta_{z_d}$, $Z_i'(s,a) = \sum\limits_{d=1}^{D} p_d' \delta_{z_d}$, and $Z_i^*(s,a) = \sum\limits_{d=1}^{D} q_d \delta_{z_d}$. \\

Let 
\[
u_{\alpha} = \alpha Z_i(s,a) + (1-\alpha)Z_i'(s,a),
\]
so that its point-mass representation is
\[
u_{\alpha} = \sum_{d=1}^D p_d(\alpha)\,\delta_{z_d},
\qquad
\text{with } 
p_d(\alpha) = \alpha p_d + (1-\alpha)p_d'.
\]

Because the cumulative distribution function (CDF) is the cumulative sum of these masses over the ordered support $\{z_d\}_{d=1}^D$, we have
\[
F_{u_{\alpha}}(z_d)
= \sum_{k=1}^d p_k(\alpha)
= \sum_{k=1}^d \bigl[\alpha p_k + (1-\alpha)p_k'\bigr].
\]
By linearity of summation, this can be written as
\begin{align*}
F_{u_{\alpha}}(z_d)
&= \alpha \sum_{k=1}^d p_k
  + (1-\alpha)\sum_{k=1}^d p_k' \\
&= \alpha F_{Z_i}(z_d)
  + (1-\alpha)F_{Z_i'}(z_d).
\end{align*}

Hence, the mixture of probability masses induces a corresponding linear mixture of the CDFs.
\\

The squared $W_2$ distance between two discrete distributions $\mu$ and $\nu$ on a common set of ordered points $\{z_d\}$ is given by $W_2^2(\mu, \nu) = \sum_{d} (z_d - z_{d-1})(F_{\mu}(z_d) - F_{\nu}(z_d))^2$ or a similar form depending on the measure. Since in our algorithm, we have a uniform step size $\Delta z$ between points, the objective function is:
$$W_2^2(u_{\alpha}, Z_i'(s,a)) = (\Delta z)^2 \sum\limits_{d=1}^{D-1} (F_{u_{\alpha}}(z_d) - F_{Z_i'}(z_d))^2$$
$$= (\Delta z)^2 \sum\limits_{d=1}^{D-1} \bigl(\alpha F_{Z_i}(z_d) + (1-\alpha)F_{Z_i'}(z_d) - F_{Z_i'}(z_d)\bigr)^2$$
$$= (\Delta z)^2 \alpha^2 \sum\limits_{d=1}^{D-1} (F_{Z_i}(z_d) - F_{Z_i'}(z_d))^2 = (\Delta z)^2 \alpha^2 A,$$
where $A = \sum\limits_{d=1}^{D-1} (F_{Z_i}(z_d) - F_{Z_i'}(z_d))^2$. Since $\alpha^2$ is convex and $A > 0$, the objective is convex.

Now, consider the constraint:
$$W_2^2(u_{\alpha}, Z_i^*(s,a)) = (\Delta z)^2 \sum\limits_{d=1}^{D-1} \bigl(F_{u_{\alpha}}(z_d) - F_{Z_i^*}(z_d)\bigr)^2 \leq \epsilon^2.$$
Let $a_d = F_{Z_i}(z_d) - F_{Z_i'}(z_d)$ and $b_d = F_{Z_i'}(z_d) - F_{Z_i^*}(z_d)$.
Then $F_{u_{\alpha}}(z_d) - F_{Z_i^{*}}(z_d) = \alpha a_d + b_d$.
The constraint becomes:
$$(\Delta z)^2 \sum\limits_{d=1}^{D-1} (\alpha a_d + b_d)^2 \leq \epsilon^2$$
$$(\Delta z)^2 \sum\limits_{d=1}^{D-1} (\alpha^2 a_d^2 + 2\alpha a_d b_d + b_d^2) \leq \epsilon^2$$
$$(\Delta z)^2 \left( \alpha^2 \sum\limits_{d=1}^{D-1} a_d^2 + 2\alpha \sum\limits_{d=1}^{D-1} a_d b_d + \sum\limits_{d=1}^{D-1} b_d^2 \right) \leq \epsilon^2$$
Substituting $A = \sum a_d^2$, $B = \sum a_d b_d$, and $C = \sum b_d^2$:
$$(\Delta z)^2 (A \alpha^2 + 2B\alpha + C) \leq \epsilon^2.$$

Since $\Delta z$ is a positive constant, dividing both sides of the inequality by $(\Delta z)^2$ preserves the ordering and simplifies the expression to:
\begin{align*}
& \min \ A \alpha^2 \\
& \text{s.t.} \ A \alpha^2 + 2B\alpha + C \leq \frac{\epsilon^2}{(\Delta z)^2} \\
& \ \ \ \ \ \alpha \in (0,1)
\end{align*}
The objective function is convex quadratic (since $A>0$), and the constraint is a convex inequality (a quadratic function of $\alpha$). Therefore, the problem is a convex quadratic optimization problem, which can be handled by standard solvers efficiently. Note that Property~\ref{prop:W2sq-convex} shows that the optimization problem in this theorem is strictly convex, even when formulated in a continuous space rather than a discrete one.

\end{proof}

\begin{property}[Convexity of Squared $W_2$ Distance]
\label{prop:W2sq-convex}
Fix $\nu\in\mathcal P_2(\mathbb R^d)$. Then the map
\[
f(\mu) := W_2^2(\mu,\nu)
\]
is convex in $\mu$ on $\mathcal P_2(\mathbb R^d)$.

\vspace{0.2cm}
\noindent\textbf{Significance.} 
This property establishes the theoretical foundation for the tractable projection step derived in Theorem \ref{thm:Convex Opti}. By proving convexity in the general space of probability measures, we guarantee that the underlying optimization landscape is strictly convex and free of local minima. From an engineering perspective, this ensures that our discrete implementation is mathematically well-posed and that standard quadratic programming solvers will reliably converge to the unique global optimum.
\end{property}

\begin{proof}
\textit{Proof.} \textbf{Step 0 (Couplings).}
Recall that $\Pi(\mu,\nu)$ denotes the set of all couplings of $\mu$ and $\nu$, i.e., all probability
measures $\pi$ on $\mathbb R^d\times\mathbb R^d$ whose first marginal is $\mu$ and second marginal is $\nu$:
\begin{align*}
\Pi(\mu,\nu)
&=
\Big\{
\pi:\ \pi(A\times \mathbb R^d)=\mu(A), \\
&\qquad\ \pi(\mathbb R^d\times B)=\nu(B),
\ \forall \text{ Borel } A,B
\Big\}.
\end{align*}
By definition,
\[
W_2^2(\mu,\nu) \;=\; \inf_{\pi\in\Pi(\mu,\nu)} \int \|x-y\|^2\,\pi(dx,dy).
\]

\noindent\textbf{Step 1 (Existence of optimal couplings).}
Since $\mu_1,\mu_2,\nu\in\mathcal P_2(\mathbb R^d)$ and the cost $c(x,y)=\|x-y\|^2$ is lower semicontinuous
and bounded from below, the Kantorovich problem admits an optimizer \citep{villani2009optimal}.
Hence, for $k\in\{1,2\}$, there exists an optimal coupling $\pi_k\in\Pi(\mu_k,\nu)$ such that
\[
W_2^2(\mu_k,\nu) \;=\; \int \|x-y\|^2\,\pi_k(dx,dy).
\]

\noindent\textbf{Step 2 (Mixture of couplings is a coupling).}
Fix $t\in[0,1]$ and define the mixture measure
\[
\pi := t\pi_1 + (1-t)\pi_2.
\]
We claim that $\pi \in \Pi(t\mu_1+(1-t)\mu_2,\nu)$.
Indeed, for any Borel sets $A,B\subseteq\mathbb R^d$, by linearity of measures,
\begin{align*}
\pi(A\times \mathbb R^d)
&= t\,\pi_1(A\times \mathbb R^d)
   + (1-t)\,\pi_2(A\times \mathbb R^d) \\
&= t\,\mu_1(A) + (1-t)\,\mu_2(A).
\end{align*}
and similarly,
\begin{align*}
\pi(\mathbb R^d\times B)
&= t\,\pi_1(\mathbb R^d\times B)
   + (1-t)\,\pi_2(\mathbb R^d\times B) \\
&= t\,\nu(B) + (1-t)\,\nu(B) \\
&= \nu(B).
\end{align*}
Therefore $\pi$ has the required marginals and is a valid coupling. This argument is in \cite{villani2009optimal}.

\noindent\textbf{Step 3 (Convexity).}
By linearity of integration in the measure,
\begin{align*}
\int \|x-y\|^2\,\pi(dx,dy)
&= t \int \|x-y\|^2\,\pi_1(dx,dy) \\
&\quad + (1-t)\int \|x-y\|^2\,\pi_2(dx,dy) \\
&= t\,W_2^2(\mu_1,\nu) + (1-t)\,W_2^2(\mu_2,\nu).
\end{align*}
Since $\pi$ is feasible for the transport problem between $t\mu_1+(1-t)\mu_2$ and $\nu$, we have
\begin{align*}
W_2^2\bigl(t\mu_1 + (1-t)\mu_2, \nu\bigr)
&= \inf_{\pi' \in \Pi(t\mu_1 + (1-t)\mu_2, \nu)} \\
&\qquad \int \|x-y\|^2\,\pi'(dx,dy) \\
&\le \int \|x-y\|^2\,\pi(dx,dy).
\end{align*}
and the desired convexity inequality follows:
\[
W_2^2(t\mu_1+(1-t)\mu_2,\nu)
\le
t\,W_2^2(\mu_1,\nu) + (1-t)\,W_2^2(\mu_2,\nu).
\]
\end{proof}

\FloatBarrier
\section{Cluster Determination}\label{app:clustering}
We determine the optimal number of clusters using the elbow method, selecting $k=3$ as the objective function showed diminishing loss improvement beyond this point (Supplementary Figure~\ref{fig:kmeans}). The inertia measure is within-cluster sum of squared distances. We use variables such as demographic information, blood pressure measurements, and diabetes status for clustering agents.

\begin{figure}[H]
    \centering
    \includegraphics[width=0.8\textwidth]{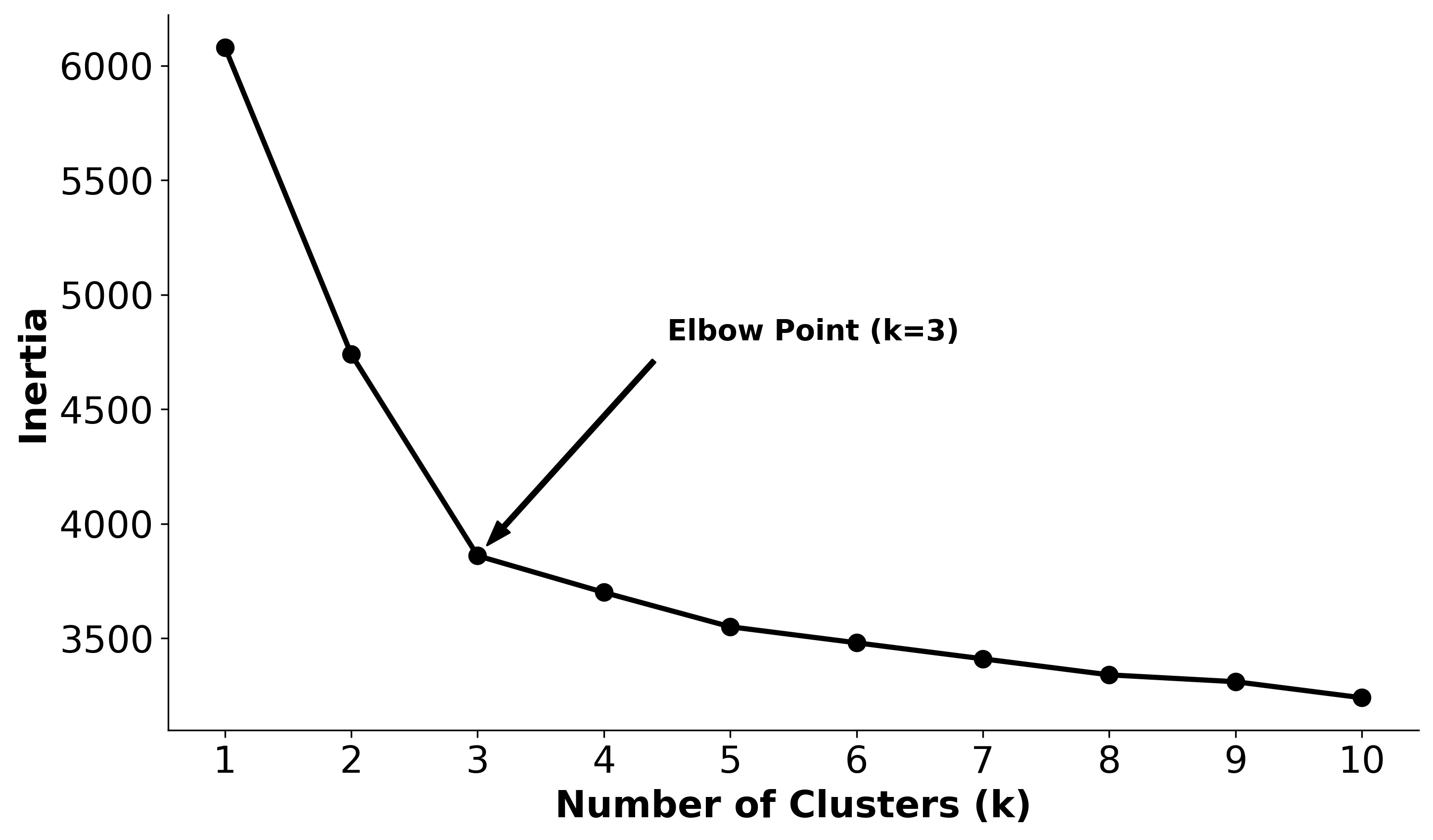}
    \caption{Selecting the number of centroids ($k$).}
    \label{fig:kmeans}
\end{figure}

\section{Batch Determination}
\label{app:convergence_diagnostics}

To ensure the statistical reliability of the policy evaluation and to construct valid confidence intervals for the estimated discounted QALYs, we employ the Batch Means Method. This approach is utilized to estimate the variance of the performance estimator and to verify that the number of simulation trajectories is sufficient for the central limit theorem to hold, thereby mitigating the risk of underestimating the standard error.

\subsection{Methodology}
We conducted a total of $N = 55,000$ independent Monte Carlo simulation trajectories for the evaluation of the optimal policy. To estimate the variance of the mean value function, the output sequences are partitioned into $B$ adjacent, non-overlapping batches, each of size $M = N/B$.

Let $Y_{j}$ denote the cumulative discounted QALYs obtained in the $j$-th simulation trajectory. The sample mean of the $q$-th batch, $\bar{Y}_q$, is calculated as:
\begin{equation*}
    \bar{Y}_q = \frac{1}{M} \sum_{j=(q-1)M + 1}^{qM} Y_{j}, \quad \text{for } q = 1, \dots, K
\end{equation*}

The grand mean estimator $\bar{\mu}$, representing the reported policy value, is the average of these batch means:
\begin{equation*}
    \bar{\mu} = \frac{1}{B} \sum_{k=1}^{B} \bar{Y}_q
\end{equation*}

The standard error (SE) of the estimator is then derived from the sample variance of the batch means:
\begin{equation*}
    \widehat{SE}(\bar{\mu}) = \sqrt{\frac{1}{B(B-1)} \sum_{q=1}^{B} (\bar{Y}_q - \bar{\mu})^2}
\end{equation*}

\subsection{Determination of Batch Count ($B$)}
To determine the optimal number of batches, we perform a sensitivity analysis on $B$. We vary the batch count over the range $B \in [10, 100]$ and monitor the stability of the estimated standard error $\widehat{SE}(\bar{\mu})$. The plot is shown in Supplemental Figure~\ref{fig:batch size}. We find $B=30$ to be the optimal choice because it yields enough batches to stabilize the standard error estimator while keeping individual batch sizes large enough to satisfy the Central Limit Theorem. This selection reaches the ``stability elbow" in standard error instability, without violating the normality assumptions required for valid confidence intervals of mean Shapiro-Wilk $p$-values.

\begin{figure}[H]
    \centering
    \includegraphics[width=0.7\textwidth]{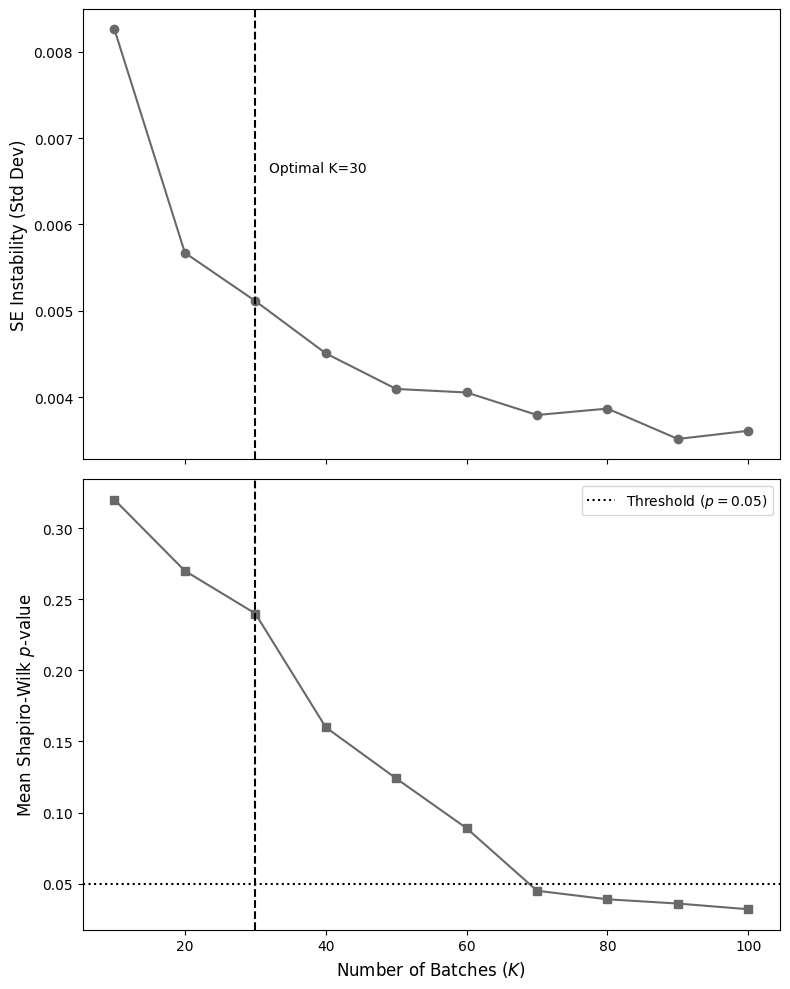}
    \caption{Finding the optimal number of batches ($B$).}
    \label{fig:batch size}
\end{figure}

\FloatBarrier
\section{Return Distribution Convergence}
\label{app:convergence_return_dist}
This section illustrates the convergence of return distributions across patients within the same risk group by tracking the trajectory of the $W_2$ distance between its two most dissimilar patients throughout the learning process. 

Supplementary Figure~\ref{fig:w2_convergence} displays a rapid decrease in distributional discrepancy across all subgroups during the first 10,000 episodes, indicating that the algorithm effectively prioritizes alignment early in the training phase. Notably, patients with high risk (darkest line) exhibit the highest initial disparity ($W_2 \approx 0.6$), reflecting the greater inherent stochasticity in high-risk trajectories, yet successfully converge to a stable low-error state ($W_2 < 0.1$) comparable to the low-risk and intermediate-risk groups by the end of training. Furthermore, a comparison of the final learned distributions reveals a substantial overlap, confirming that the model successfully mitigates distributional inconsistencies.

\begin{figure}[h!]
    \centering
    \includegraphics[width=0.7\linewidth]{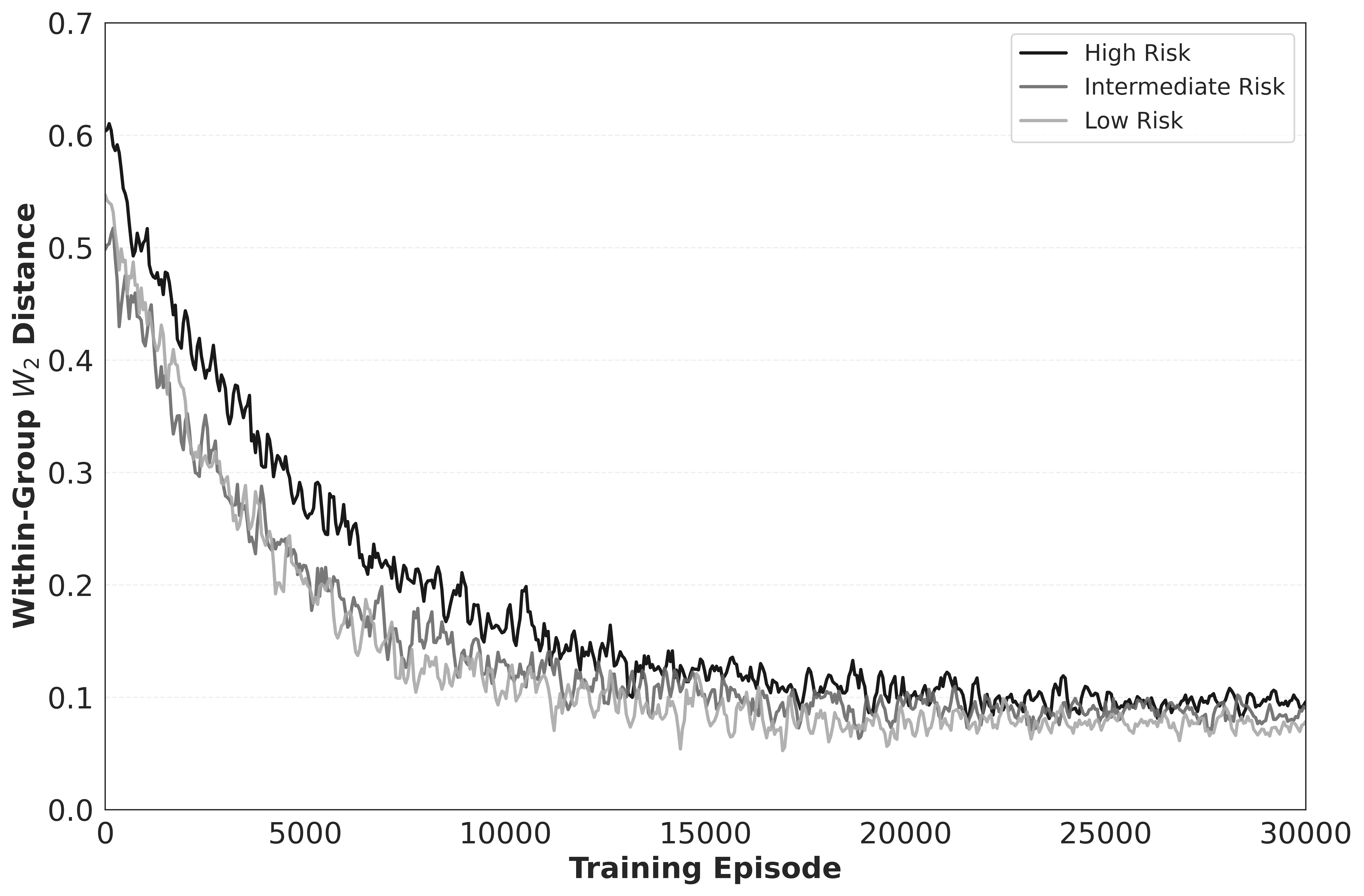}
    \caption{\textbf{\boldmath Convergence of $W_2$ distance within subgroups, showing the reduction in disparity over training episodes.}}
    \label{fig:w2_convergence}
\end{figure}

\end{document}